\documentclass[twoside]{article}

\usepackage[preprint]{aistats2026}
\usepackage[round]{natbib}

\usepackage{acronym}
\usepackage{amssymb}
\usepackage{amsthm}
\usepackage{thmtools}
\usepackage{thmtools}
\usepackage[capitalise]{cleveref}
\usepackage{subcaption}
\usepackage{csquotes}
\usepackage{dsfont}
\usepackage{tikz}
\usetikzlibrary{shapes.geometric}
\usepackage{enumitem}

\usetikzlibrary{calc,positioning}
\tikzset{
	rv/.style={draw, ellipse},
	pf/.style={draw, rectangle, fill = gray!30},
	arc/.style = {->, >={[round,sep]Stealth}},
}

\newcommand\factorat[4]{
	\node[pf, label={#2:{#3}}](#4) at (#1) {};
}

\newcommand\factor[6]{
	\node[pf, #1=#3 of #2, label={#4:{#5}}](#6) {};
}

\newtheorem{theorem}{Theorem}
\newtheorem{corollary}[theorem]{Corollary}
\newtheorem{definition}{Definition}
\newtheorem{example}{Example}
\newtheorem{lemma}[theorem]{Lemma}

\usepackage{algorithm}
\usepackage{algpseudocode}
\crefname{algorithm}{Alg.}{Algs.}
\Crefname{algorithm}{Algorithm}{Algorithms}
\crefname{definition}{Def.}{Defs.}
\Crefname{definition}{Definition}{Definitions}
\crefname{corollary}{Corollary}{Corollaries}
\Crefname{corollary}{Corollary}{Corollaries}

\acrodef{rv}[randvar]{random variable}
\acrodef{acp}[ACP]{Advanced Colour Passing}
\acrodef{bn}[BN]{Bayesian network}
\acrodef{cp}[CP]{Colour Passing}
\acrodef{fg}[FG]{factor graph}
\acrodef{mn}[MN]{Markov network}
\acrodef{pgm}[PGM]{probabilistic graphical model}
\acrodef{msx}[MSLX]{minimal structural Laplace extension}
\acrodef{lx}[LX]{Laplace extension}
\acrodef{lf}[LF]{Laplace factor}
\acrodef{ms}[MS]{measurable space}

\newcommand{\range}{\ensuremath{\text{range}}}
\newcommand{\scope}{\ensuremath{\text{scope}}}

\allowdisplaybreaks
\algrenewcommand\algorithmicindent{0.5em}
\algtext*{EndIf}
\algtext*{EndFor}

\tikzset{
	rv/.style={draw, ellipse},
	pf/.style={draw, rectangle, draw = black!70, fill = gray!30},
	arc/.style = {->, semithick, >={[round,sep]Stealth}},
	doublearc/.style = {<->, semithick, >={[round,sep]Stealth}},
}

\newcommand\factorcolored[7]{%
	\node[pf, draw=#7, fill=#7!30, #1=#3 of #2, label={#4:{#5}}](#6) {};
}

\begin{document}

%

%

\twocolumn[

\aistatstitle{Inducing Comparability of Factorised Probability Distributions}

\aistatsauthor{ Jan Speller \And Malte Luttermann \And  Marcel Gehrke \And Tanya Braun }

\aistatsaddress{ Data Science Group\\
University of Münster\\ Germany \And Institute for\\ Humanities-Centered AI\\ University of Hamburg\\ Germany \And Institute for\\ Humanities-Centered AI\\ University of Hamburg\\ Germany \And Data Science Group\\ University of Münster\\ Germany } ]

\begin{abstract}
To allow for principled comparison between two probabilistic graphical models defined over non-identical variable sets, they have to be lifted to a common measurable space.
To this end, we propose an extension scheme for any two given models and establish the formal foundation:
Unmatched components are completed using conditionally uniform (Laplace) extensions such that the resulting joint distributions differ from the original ones only by multiplicative constants and coincide under projection. 
This preserves the probabilistic semantics while enabling the application of well-defined distributional discrepancy measures.
We establish the invariance of the induced joint under projection and use the extensions to provide a minimal structural extension of two factor graphs to the smalles common measurable space as well as to a common graphical structure by a deterministic algorithm.
In addition, we discuss structural and measure-theoretic properties and identify promising criteria for comparison methodologies.
\end{abstract}

\section{Introduction}
Comparing probability distributions assumes that the distributions are defined over the same support (e.g., total variance distance~\citep{Bretagnolle1978estimation}), with compatible metric spaces (e.g., Wasserstein metric~\citep{vaserstein1969markov}), or ideally the same \ac{ms} (e.g., Hellinger distance~\citep{hellinger1909neue}).
However, in many settings, one faces the challenge of having to compare distributions that are defined over non-identical \acp{ms}.
Changes might come from concept drifts over time \citep{HadBeWe14,FiBrGeMo21a}, updates in human-aware settings \citep{ChaSrZhKa17,KulSrKa19}, or simply from learning two models from two related sources (e.g., the databases of two companies that contain similar but not identical information) or with different learning algorithms.
Since such models can only be compared after lifting them to a common \ac{ms}, we study principled distribution extensions that preserve semantics.

Specifically, we focus on \acp{fg} \citep{Frey1997a} representing factorised probability distributions as a generalised problem setting to enable both a global comparison of full joint distributions \citep{kullback1951information} as well as a local comparison on factor level \citep{Chan2005a}. 
A full joint distribution can always be considered an \ac{fg} with a single factor.
Factorised distributions exploit (conditional) independences among \acp{rv} to store the same full joint distribution with fewer entries, shifting complexity-wise from $O(r^n)$ with $r$ being the largest number of values a \ac{rv} can take and $n$ being the number of \acp{rv} in the full joint distribution to $O(m r^s)$ with $m$ being the number of factors and $s$ being the largest number of \acp{rv} in a factor.
Such factorised distributions are often accompanied by a graphical representation, making them fall into the category of \acp{pgm}.
There are several flavours of \acp{pgm} such as the above mentioned \acp{fg}, \acp{bn} \citep{Pea88} as well as \acp{mn} \citep{Moussouris1974a}.
According to the Hammersley-Clifford theorem~\citep{HamCl71}, every underlying probability distribution encoded by an \ac{fg} can be represented by a \ac{bn} and an \ac{mn}.
The results presented in this paper are thus not limited to \acp{fg} but can also be used to compare factorised distributions encoded by \acp{bn} and \acp{mn}.

To extend \acp{fg} in a principled way, we consider their structure, adding new factors, new \acp{rv}, or existing \acp{rv} to existing factors.
To actually preserve the semantics of an \ac{fg} under projection while extending it, we employ uniform (Laplace) extensions to complete unmatched components between factors of different models.
A Laplace extension represents a special case of a general extension in the field of probability theory as a way to extend probability spaces~\citep{bierlein1962fortsetzung,ascherl1977two,bogachev2007measure}, which has been comparatively underexplored.
We prove that Laplace extensions admit a surjective, measure-preserving projection onto the original \ac{fg}. Based on this result, we introduce \acp{msx} to align \acp{fg} with non-identical \ac{rv} sets, enabling comparison via existing distance measures. To the best of our knowledge, this is the first principled approach for comparing \acp{pgm} with non-identical \ac{rv} sets.

The remainder of this paper is structured as follows:
After notations, we present \ac{fg} extensions, followed by an algorithm that guarantees extensions of two arbitrary \acp{fg} defined on the same \ac{ms} while enforcing an identical graphical structure, thereby enabling direct comparison, followed by a discussion and conclusion.
Longer proofs and a discussion of minimality are provided in the appendix.

\section{Notation}
Given a set $\boldsymbol{R}$ of \acp{rv}, let $\mathcal{X}_{\boldsymbol{R}}:=\times_{X\in\boldsymbol{R}}\ \range(X)$ denote the Cartesian product of their ranges, where $\range(X)$ is the set of values that $X$ can take.
An \ac{fg} is a \ac{pgm} that compactly encodes a probability distribution over a set of \acp{rv} by factorising the distribution into a product of factors~\citep{Frey1997a,KscFrLo01}.

\begin{definition}[Factor Graph] \label{def:fg}
	A \textbf{\acf{fg}} $M = (\boldsymbol{V}, \boldsymbol{E})$ is an undirected bipartite graph consisting of a set of nodes $\boldsymbol{V} = \boldsymbol{R} \cup \boldsymbol{\Phi}$, where $\boldsymbol{R} = \{X_1, \ldots, X_n\}$ is a set of \acp{rv} and $\boldsymbol{\Phi} = \{\phi_1, \ldots, \phi_m\}$ is a set of factors (functions), as well as a set of edges $\boldsymbol{E} \subseteq \boldsymbol{R} \times \boldsymbol{\Phi}$.  
    There exists an edge between a \ac{rv} $X_i \in \boldsymbol{R}$ and a factor $\phi_j \in \boldsymbol{\Phi}$ in $\boldsymbol{E}$ if $X_i$ appears in the argument list (also called \emph{scope}) $\boldsymbol{R}_{(j)}:=\scope(\phi_j)$ of $\phi_j$, where $\boldsymbol{R}_{(j)} \subseteq \boldsymbol{R}$.
	A factor $\phi_j$ defines a function
        $\phi_j \colon \mathcal{X}_{\boldsymbol{R}_{(j)}} \mapsto \mathbb{R}_{>0}$
    that maps range values of its arguments to a positive real number, called potential.
	We define the joint potential for an assignment $\boldsymbol{r}$ (with $\boldsymbol{r}$ abbreviating $\boldsymbol{R} = \boldsymbol{r}$) as
		$\psi(\boldsymbol{r}) = \prod_{ \phi_j\in\boldsymbol{\Phi} } \phi_j(\boldsymbol{r}_{j})$,
	where $\boldsymbol {r}_j$ is a projection of the assignment $\boldsymbol{r}$ to the scope $\boldsymbol{R}_{(j)}$ of $\phi_j$.
    Given the \ac{ms} $(\mathcal{X}_{\boldsymbol{R}},\mathcal{P}(\mathcal{X}_{\boldsymbol{R}}))$, where $\mathcal{X}_{\boldsymbol{R}}$ is the set of all possible assignments and $\mathcal{P}(\mathcal{X}_{\boldsymbol{R}})$ being its power set serving as the $\sigma$-algebra, the probability measure $P_M$ is the normalised joint potential
	\begin{align*}
		P_M(\boldsymbol{r}) = \frac{1}{Z} \prod_{ \phi_j\in\boldsymbol{\Phi} }\phi_j(\boldsymbol{r}_{ j}) = \frac{1}{Z} \,\psi(\boldsymbol{r}),
	\end{align*}
    where $Z = \sum_{\boldsymbol{r}\in\mathcal{X}_{\boldsymbol{R}}} \prod_{ \phi_j\in\boldsymbol{\Phi} }\phi_j(\boldsymbol{r}_{ j})$ is the normalisation constant (also called partition function).
\end{definition}

\begin{example}[Factor Graph]
    Consider the \ac{fg} $M_{ex} = (\boldsymbol{R} \cup \boldsymbol{\Phi}, \boldsymbol{E})$ depicted in \cref{fig:examplefg}, where $\boldsymbol{R} = \{ A, \allowbreak B, \allowbreak C \}$, $\boldsymbol{\Phi} = \{ \phi_1, \allowbreak \phi_2 \}$, and $\boldsymbol{E} = \{ (A, \allowbreak \phi_1), \allowbreak (B, \allowbreak \phi_1), \allowbreak (B, \allowbreak \phi_2), \allowbreak (C, \allowbreak \phi_2) \}$.
    The scopes are given by $\boldsymbol{R}_{(1)} = \{ A, \allowbreak B \}$ and $\boldsymbol{R}_{(2)} = \{ C, \allowbreak B \}$.
    The function definitions of $\phi_1$ and $\phi_2$ are given in the tables in \cref{fig:examplefg}, that is, $\phi_1(A = \text{true}, B = \text{true}) = \varphi_1$, $\phi_1(A = \text{true}, B = \text{false}) = \varphi_2$, and so on, where $\varphi_i \in \mathbb{R}_{>0}$, $i \in \{ 1, \allowbreak \ldots, \allowbreak 8 \}$, are positive real numbers.
    The joint potential, e.g., for the assignment $\boldsymbol r = (A = \text{true}, \allowbreak B = \text{true}, \allowbreak C = \text{true})$ is given by $\psi(\boldsymbol{r}) = \phi_1(A = \text{true}, B = \text{true}) \cdot \phi_2(C = \text{true}, B = \text{true}) = \varphi_1 \cdot \varphi_5$.
\end{example}

\begin{figure}[t]
    \centering
    \resizebox{\linewidth}{!}{
        \begin{tikzpicture}[scale=0.9]
            \node[circle, draw] (A) {$A$};
        	\node[circle, draw] (B) [below = 0.25cm of A] {$B$};
        	\node[circle, draw] (C) [below = 0.25cm of B] {$C$};
        	\factor{below right}{A}{0.125cm and 0.5cm}{270}{$\phi_1$}{f1}
        	\factor{below right}{B}{0.125cm and 0.5cm}{270}{$\phi_2$}{f2}
            \node[right = 0.1cm of f1, yshift=-4mm] (tabs) {
                \begin{tabular}{c|c|c c@{\hspace{0.1cm}}c|c|c}
                    $A$ & $B$ & $\phi_1(A,B)$ & & $C$ & $B$ & $\phi_2(C,B)$ \\ \cline{1-3} \cline{5-7}
                    true  & true  & $\varphi_1$ & & true  & true  & $\varphi_5$ \\
                    true  & false & $\varphi_2$ & & true  & false & $\varphi_6$ \\
                    false & true  & $\varphi_3$ & & false & true  & $\varphi_7$ \\
                    false & false & $\varphi_4$ & & false & false & $\varphi_8$ \\
                \end{tabular}
            };

        	\draw (A) -- (f1);
        	\draw (B) -- (f1);
        	\draw (B) -- (f2);
        	\draw (C) -- (f2);
        \end{tikzpicture}
    }
    \caption{An exemplary \ac{fg} encoding a probability distribution over three \acp{rv} $A$, $B$, and $C$ (left). The function definitions of the factors $\phi_1$ and $\phi_2$, denoted as potential tables, are given on the right.}
    \label{fig:examplefg}
\end{figure}

We next introduce extensions of \acp{fg} for comparing models defined over non-identical variable sets.

\section{Factor Graph Extensions}\label{section:three}
In general, an extension to an \ac{fg} can add new factors, new \acp{rv}, or edges (i.e., adding an existing \acp{rv} to an existing factor), resulting in an enlarged \ac{ms} over which the \ac{fg} is defined, whenever a new \ac{rv} is added.
The purpose may be to partially update an \ac{fg} while preserving an accurate description of the underlying distribution.
For comparability, the goal is not to add information but to structurally align an \ac{fg} with another, ensuring a common \ac{ms} while preserving the original distribution under projection onto the original set of \acp{rv}.
After presenting general \ac{fg} extensions, we apply the concept of uniform variable influence as a special case to ensure desirable comparability properties.

\subsection{General Factor Graph Extensions}\label{subsec:generalfgx}
This section defines general extensions for \acp{fg}, which focuses on structural relations (\acp{rv}, factors) without imposing any constraints on the potentials in the factors.

\begin{definition}[Factor Graph Extension]\label{def:factorgraphextension}
    An \textbf{extension} $M^{\text{x}}=(\boldsymbol V^{\text{x}},\boldsymbol E^{\text{x}}) = (\boldsymbol{R}^{\text{x}}\cup \boldsymbol\Phi^{\text{x}}, \boldsymbol E^{\text{x}})$ of an \ac{fg} $M = (\boldsymbol{V}^{\text{orig}}, \boldsymbol{E}^{\text{orig}})=(\boldsymbol{R}^{\text{orig}}\cup\boldsymbol{\Phi}^{\text{orig}},\boldsymbol{E}^{\text{orig}})$ is any \ac{fg} $M^{\text{x}}$, where\vspace{-0.25cm}
    \begin{itemize}
        \item [(i)] $\boldsymbol{R}^{\text{x}}= \boldsymbol{R}^{\text{orig}} \cup \boldsymbol{R}^{\text{new}}$ with $\boldsymbol{R}^{\text{orig}} \cap \boldsymbol{R}^{\text{new}} =\emptyset$,
        \item [(ii)] $\boldsymbol{\Phi}^{\text{x}}=\boldsymbol{\Phi}^{\text{orig},\text{x}} \cup \boldsymbol{\Phi}^{\text{new}}$ with $\boldsymbol{\Phi}^{\text{orig},\text{x}} \cap \boldsymbol{\Phi}^{\text{new}}=\emptyset$ and $\boldsymbol{\Phi}^{\text{orig},\text{x}}$ being any set of factors for which there exists a bijection
$\eta:\boldsymbol{\Phi}^{\text{orig}}\to\boldsymbol{\Phi}^{\text{orig},\text{x}}$
such that $\boldsymbol{R}_{(i)}^{\text{orig}}\subseteq\boldsymbol{R}_{(j)}^{\text{x}}$ whenever $\eta(\phi_i)=\phi_j^{\text{x}}$.
    \end{itemize}\vspace{-0.25cm}
    The set of edges $\boldsymbol E^{\text{x}}$ contains an edge between a \ac{rv} $X \in \boldsymbol{R}^{\text{x}}$ and a factor $\phi_j^{\text{x}} \in \boldsymbol{\Phi}^{\text{x}}$ if $X \in \boldsymbol{R}_{(j)}^{\text{x}}$.
    For the \emph{trivial} extension, it holds that $\boldsymbol{R}^{\text{new}}= \emptyset$, $\boldsymbol{\Phi}^{\text{new}}=\emptyset$, and $\boldsymbol{\Phi}^{\text{orig},\text{x}} = \boldsymbol{\Phi}^{\text{orig}}$, yielding $M^{\text{x}}=M$. 
\end{definition}



\begin{example}[Factor Graph Extension]
    Consider the \ac{fg} $M_{ex}$ depicted in \cref{fig:examplefg} and assume that only an additional edge $\{ C, \allowbreak \phi_1 \}$ is added to $M_{ex}$ to obtain an extension $M^{\text{x}}_{ex}$ of $M_{ex}$. 
    Then, the scope of $\phi_1$ extends from $\boldsymbol{R}_{(1)}^{\text{orig}} = \{ A, \allowbreak B \}$ to $\boldsymbol{R}_{(1)}^{\text{x}} = \{ A, \allowbreak B, \allowbreak C \}$.
    We get $\boldsymbol{\Phi}^{\text{orig},\text{x}} = \{ \phi_1^{\text{x}} \} \cup \left(\boldsymbol{\Phi}^{\text{orig}} \setminus \{ \phi_1 \} \right)$, where $\phi_1^{\text{x}}(A, \allowbreak B, \allowbreak C)$ now defines a potential table with $2^3 = 8$ instead of $2^2 = 4$ entries, with $\boldsymbol{R}^{\text{new}} = \emptyset$ and $\boldsymbol{\Phi}^{\text{new}} = \emptyset$.
\end{example}

As soon as any edge is added to any of the original factors $\phi_i\in\boldsymbol{\Phi}^{\text{orig}}$ (scope extension), its number of potentials grows with regard to the number of range values of the additional \ac{rv}.
In general, this means that $\boldsymbol{\Phi}^{\text{orig},\text{x}}$ is not a subset nor a superset of the original set of factors $\boldsymbol{\Phi}^{\text{orig}}$.
Additionally, although $M$ is not necessarily a subgraph of $M^{\text{x}}$ in the strict graph-theoretic sense, the extension preserves the original factorisation of the full joint probability distribution in the sense that for every factor in $M$, there is a factor in $M^{\text{x}}$ whose scope contains the scope of the original factor.

There are different ways in which a non-trivial extension of a \ac{fg} can be realised.
To clearly distinguish between the different cases and to make dependencies explicit, we represent the scope of an extended factor $\phi_i^{\text{x}}$ as a partition of \acp{rv}
\begin{align*}
    \boldsymbol{R}^{\text{x}}_{(i)} := \scope(\phi_i^{\text{x}}) 
    := \boldsymbol{R}_{(i)}^{\text{orig}} \,\dot\cup\, \boldsymbol{R}_{(i)}^{\text{cross}} \,\dot\cup\, \boldsymbol{R}_{(i)}^{\text{new}},
\end{align*}
where $\boldsymbol{R}_{(i)}^{\text{orig}}=\scope(\phi_i)$ is the scope of the original factor $\phi_i$, $\boldsymbol{R}_{(i)}^{\text{new}}\subseteq\boldsymbol{R}^{\text{new}}$ are new \acp{rv} that have not been part of the set $\boldsymbol{R}^{\text{orig}}$ of the original \ac{fg} $M$ before, and $\boldsymbol{R}_{(i)}^{\text{cross}}\subseteq\boldsymbol{R}^{\text{orig}}\setminus\boldsymbol{R}_{(i)}^{\text{orig}}$ are \acp{rv} that have been in the scope of at least one factor $\phi_j \in \boldsymbol{\Phi}^{\text{orig}}$, $j\neq i$, but not in the scope of $\phi_i$.

The projection $\boldsymbol{r}_i^{\text{x}}$ of any assignment $\boldsymbol{r}^{\text{x}} \in \mathcal{X}_{\boldsymbol{R}^{\text{x}}}$ in the extension to the scope $\boldsymbol{R}^{\text{x}}_{(i)}$ of $\phi_i$ can also be partitioned as $\boldsymbol{r}_i^{\text{x}}= (\boldsymbol{r}^\text{orig}_i,\boldsymbol{r}^\text{cross}_i,\boldsymbol{r}^\text{new}_i)$ in
\begin{align*}
    \mathcal{X}_{\boldsymbol{R}^{\text{orig}}_{(i)}} \times \mathcal{X}_{\boldsymbol{R}^{\text{cross}}_{(i)} }\times \mathcal{X}_{\boldsymbol{R}^{\text{new}}_{(i)} }=\mathcal{X}_{\boldsymbol{R}_{(i)}^{\text{x}}}.
\end{align*}
The partition is consistent with the previous notation of the \acp{rv} $\boldsymbol{R}^{\text{new}} = \cup_{i:\phi^{\text{x}}_i\in\boldsymbol{\Phi}^{\text{new}}} \boldsymbol{R}_{(i)}^{\text{new}} $ and $\boldsymbol{R}^{\text{orig}} = \cup_{i:\phi_i\in\boldsymbol{\Phi}^{\text{orig}}}\boldsymbol{R}_{(i)}^{\text{orig}} $, where the individual scopes are not required to be disjoint.

Generally, an extension is obtained by applying one or more of the following elementary extensions:\vspace{-0.25cm}
\begin{enumerate}[label=(\roman*)]
    \item \textbf{Adding a new factor:} A new factor $\phi_i^{\text{x}} \in \boldsymbol{\Phi}^{\text{new}}$ is introduced, whose scope may contain new \acp{rv} $\boldsymbol{R}_{(i)}^{\text{new}}$ or original \acp{rv} $\boldsymbol{R}_{(i)}^{\text{cross}}$, whereas $\boldsymbol{R}_{(i)}^{\text{orig}}=\emptyset$.\vspace{-0.1cm}
    \item \textbf{Adding a new \ac{rv} to an original factor:} A \ac{rv} $X$ that did not occur in the scope of any original factor is added to the scope of $\phi_i$ ($X \in \boldsymbol{R}_{(i)}^{\text{new}}$).\vspace{-0.1cm}
    \item \textbf{Adding an original \ac{rv} to an original factor:} A \ac{rv} $X$ that occurred in the scope of at least one original factor but not in the scope of $\phi_i$ is added to the scope of $\phi_i$ ($X \in \boldsymbol{R}_{(i)}^{\text{cross}}$).\vspace{-0.1cm}
\end{enumerate}
The edges in the extended \ac{fg} are induced by the scopes of the factors as in the \ac{fg} definition in \cref{def:fg}.
Any extension of an \ac{fg} necessarily contains at least as many potentials per factor and may introduce additional factors.
This increased representational capacity is not tied to any specific distributional form.
However, next, we consider the form of uniform extensions for the purpose of comparability later on.

\subsection{Laplace Extension}
Extensions, in general, allow for any potentials in the extended parts.
If constructed appropriately, though, the extension admits the original joint distribution as a surjective projection, using the discrete uniform distribution.
To this end, we introduce the concept of a \ac{lx}.
\begin{definition}[Laplace Factor]\label{def:lapfactor}
    A factor $\phi_{i}$ is a \textbf{\ac{lf}} if there exists a constant $c_i\in \mathbb{R}_{>0}$ such that
    \begin{align*}
        \forall\;\boldsymbol{r}^{\text{orig}}_{ i} \in \mathcal{X}_{\boldsymbol{R}_{(i)}^{\text{orig}}} : \phi_{i}(\boldsymbol{r}^{\text{orig}}_{ i})=c_i.
    \end{align*}
\end{definition}
 As a special case of the extension options introduced in \cref{subsec:generalfgx}, consider elementary extension~(i).
 Using the established notation, an extended \ac{lf} satisfies
 \begin{align*}
         \phi_{i}^{\text{x}}(\boldsymbol{r}^{\text{x}}_{i})=c_i \text{ for all }\boldsymbol{r}^{\text{x}}_{i} \in \mathcal{X}_{\boldsymbol{R}_{(i)}^{\text{x}}}.
     \end{align*}
The remaining elementary extensions~(ii) and~(iii) of \cref{subsec:generalfgx} are captured by the following definition.
\begin{definition}[Laplace Extension]
    Let $M$ be an \ac{fg} and let $M^{\text{x}}$ be an extension of $M$.
    A factor $\phi_i^{\text{x}}\in\boldsymbol{\Phi}^{\text{x}}$ is a \textbf{\acf{lx}} of a given factor $\phi_i\in\boldsymbol{\Phi}^{\text{orig}} $ if there exists a constant $c_i\in \mathbb{R}_{>0}$ such that
    \begin{align*}
        c_i\cdot \phi_i(\boldsymbol{r}^{\text{orig}}_{i})=  \phi_i^{\text{x}}(\boldsymbol{r}_i^{\text{x}})\text{ for all }\boldsymbol{r}_i^{\text{x}} \in \mathcal{X}_{\boldsymbol{R}_{(i)}^{\text{x}}}.
    \end{align*}       
    An extension $M^{\text{x}}$ of an \ac{fg} $M$ is called \ac{lx} if every extension $\phi_{i}^{\text{x}}$ of a factor $\phi_{i}\in\boldsymbol{\Phi}^{\text{orig}} $ is an \ac{lx}  and every factor $\phi_i^{\text{x}}\in\boldsymbol{\Phi}^{\text{new}}$ is a \ac{lf}.
    An \ac{fg} is a \textbf{Laplace \ac{fg}} if all of its factors are \acp{lf}.
\end{definition}

Under an \ac{lx}, the introduced set of \acp{rv} $\boldsymbol{R}_{(i)}^{\text{cross}}\cup \boldsymbol{R}_{(i)}^{\text{new}}$ is locally conditionally independent of the remaining \acp{rv} within the same factor $\phi_i^{\text{x}}$.
At first glance, this construction appears to introduce unnecessary dependencies by augmenting the \ac{fg} with seemingly non-informative parts.
However, \acp{lx} act as a theoretical device for enabling comparability between \acp{fg} on a factor level.
Importantly, the influence of a \ac{rv} is factor-specific: a variable may be Laplace (i.e., non-informative) in one factor while being highly informative in another and thus might not be uniformly distributed with respect to the full joint distribution.

\begin{example}[Laplace Extension]
    Consider the \ac{fg} $M$ (left) and its extension $M^{\text{x}}$ (right) shown in \cref{fig:examplefgext}.
    Let $\phi_1^{\text{x}}(A = a, \allowbreak B = b, \allowbreak D = d) = \phi_1^{\text{orig}}(A = a, \allowbreak B = b)$ for all assignments $(a,b)$ of $A$ and $B$ independent of the assigned value $d$ of $D$ (i.e., $\phi_1^{\text{x}}(A = \text{true}, \allowbreak B = \text{true}, \allowbreak D = \text{true}) = \phi_1^{\text{orig}}(A = \text{true}, \allowbreak B = \text{true}) = \varphi_1$, $\phi_1^{\text{x}}(A = \text{true}, \allowbreak B = \text{true}, \allowbreak D = \text{false}) = \phi_1^{\text{orig}}(A = \text{true}, \allowbreak B = \text{true}) = \varphi_1$, and so on).
    Further, let $\phi_2^{\text{x}}$ be a \ac{lf} (e.g., $\phi_2^{\text{x}}(B = b, \allowbreak C = c) = 1$ for all assignments $(b,c)$ of $B$ and $C$).
    Then, $M^{\text{x}}$ is an \ac{lx} of $M$.
\end{example}

\begin{figure}[t]
    \centering
    \resizebox{\linewidth}{!}{
        \begin{tikzpicture}[scale=0.9]
        	\node[circle, draw] (A) {$A$};
        	\node[circle, draw] (B) [below = 0.5cm of A] {$B$};
        	\factor{below right}{A}{0.25cm and 0.5cm}{90}{$\phi_1^{\text{orig}}$}{f1}
        	\draw (A) -- (f1);
        	\draw (B) -- (f1);
    
            \node[right = 0.2cm of f1] (tabs) {
                \begin{tabular}{c|c|c}
                    $A$ & $B$ & $\phi_1^{\text{orig}}(A,B)$ \\ \hline
                    true  & true  & $\varphi_1$ \\
                    true  & false & $\varphi_2$ \\
                    false & true  & $\varphi_3$ \\
                    false & false & $\varphi_4$ \\
                \end{tabular}
            };
            
            \node[circle, draw] (A2) [right = 5.5cm of A]{$A$};
            \node[circle, draw] (D2) [right = 1cm of A2]{$D$};
        	\node[circle, draw] (B2) [below = 0.5cm of A2] {$B$};
        	\node[circle, draw] (C2) [below = 0.5cm of D2] {$C$};
            \factorat{$(A2)!0.5!(D2)$}{90}{$\phi_1^{\text{x}}$}{f12}
            \factorat{$(B2)!0.5!(C2)$}{90}{$\phi_2^{\text{x}}$}{f22}
        	\draw (A2) -- (f12);
        	\draw (B2) -- (f12);
        	\draw (B2) -- (f22);
        	\draw (C2) -- (f22);
            \draw (D2) -- (f12);
        \end{tikzpicture}
    }
    \caption{An \ac{fg} $M$ encoding a full joint probability distribution over two \acp{rv} $A$ and $B$ (left) and an extension $M^{\text{x}}$ of $M$ (right).}
    \label{fig:examplefgext}
\end{figure}

General elementary extensions induce a modification of the partition function $Z^{\text{x}}$ in the extended \ac{fg} relative to the original partition function~$Z$ that are hard to capture and may require a costly recomputation of the partition function.
For the special case of an \ac{lx}, the effect on the partition function can be explicitly characterised.

\begin{theorem}\label{theorem:generalpartitionfunction}
    Let $M^{\text{x}}$ be an \ac{lx} of $M$.
    Then, the partition function of $P_{M^{\text{x}}}$ is given by
    \begin{align*}
            Z^{\text{x}}= c \cdot \prod_{X\in \boldsymbol{R}^{\text{new}}}^{} \vert \range(X)\vert \cdot Z= c\cdot \vert \mathcal{X}_{\boldsymbol{R}^{\text{new}}}\vert\cdot Z
    \end{align*}
    with $c:=\prod_{\phi^{\text{x}}_i\in\boldsymbol{\Phi}^{\text{x}}} c_i\in\mathbb{R}_{> 0}$ and $c_i$ given by $c_i\cdot \phi_i(\boldsymbol{r}^{\text{orig}}_i)=  \phi_i^{\text{x}}(\boldsymbol{r}_i^{\text{x}})$ for $\phi_i^{\text{x}}\in\boldsymbol{\Phi}^{\text{orig},\text{x}}$ and $\phi_{i}(\boldsymbol{r}^{\text{x}}_{i})=c_i$ for $\phi_i^{\text{x}}\in\boldsymbol{\Phi}^{\text{new}}$.
\end{theorem}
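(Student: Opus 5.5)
The plan is to compute $Z^{\text{x}}$ directly from \cref{def:fg} by writing the joint potential of $M^{\text{x}}$ in terms of that of $M$. Then I sum over the enlarged space $\mathcal{X}_{\boldsymbol{R}^{\text{x}}} = \mathcal{X}_{\boldsymbol{R}^{\text{orig}}} \times \mathcal{X}_{\boldsymbol{R}^{\text{new}}}$, using condition (i) of \cref{def:factorgraphextension}, namely that $\boldsymbol{R}^{\text{orig}}$ and $\boldsymbol{R}^{\text{new}}$ are disjoint.

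Fix an assignment $\boldsymbol{r}^{\text{x}} = (\boldsymbol{r},\boldsymbol{r}^{\text{new}})$ with $\boldsymbol{r}\in\mathcal{X}_{\boldsymbol{R}^{\text{orig}}}$ and $\boldsymbol{r}^{\text{new}}\in\mathcal{X}_{\boldsymbol{R}^{\text{new}}}$. Split the product defining $\psi^{\text{x}}(\boldsymbol{r}^{\text{x}})$ into two parts: the factors in $\boldsymbol{\Phi}^{\text{orig},\text{x}}$ and those in $\boldsymbol{\Phi}^{\text{new}}$.

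For $\phi_j^{\text{x}} = \eta(\phi_i)\in\boldsymbol{\Phi}^{\text{orig},\text{x}}$, the \ac{lx} property gives $\phi_j^{\text{x}}(\boldsymbol{r}^{\text{x}}_j) = c_j\,\phi_i(\boldsymbol{r}^{\text{orig}}_i)$. This needs the observation that the $\text{orig}$-part of the projection $\boldsymbol{r}^{\text{x}}_j$ is exactly the projection $\boldsymbol{r}_i$ of $\boldsymbol{r}$ onto $\scope(\phi_i)$. That holds because $\boldsymbol{R}^{\text{orig}}_{(i)}\subseteq\boldsymbol{R}^{\text{x}}_{(j)}$ and both are restrictions of the same assignment. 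For $\phi_j^{\text{x}}\in\boldsymbol{\Phi}^{\text{new}}$, the \ac{lf} property gives $\phi_j^{\text{x}}\equiv c_j$.

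Since $\eta$ is a bijection, the product over $\boldsymbol{\Phi}^{\text{orig},\text{x}}$ reindexes to a product over $\boldsymbol{\Phi}^{\text{orig}}$. This yields
\begin{align*}
\psi^{\text{x}}(\boldsymbol{r},\boldsymbol{r}^{\text{new}}) = \Bigl(\prod_{\phi^{\text{x}}_j\in\boldsymbol{\Phi}^{\text{x}}} c_j\Bigr)\prod_{\phi_i\in\boldsymbol{\Phi}^{\text{orig}}}\phi_i(\boldsymbol{r}_i) = c\,\psi(\boldsymbol{r}),
\end{align*}
which is independent of $\boldsymbol{r}^{\text{new}}$.

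Summing over the product space then gives
\begin{align*}
Z^{\text{x}} = \sum_{\boldsymbol{r}\in\mathcal{X}_{\boldsymbol{R}^{\text{orig}}}}\sum_{\boldsymbol{r}^{\text{new}}\in\mathcal{X}_{\boldsymbol{R}^{\text{new}}}} c\,\psi(\boldsymbol{r}) = c\,\vert\mathcal{X}_{\boldsymbol{R}^{\text{new}}}\vert\, Z.
\end{align*}
Finally, $\vert\mathcal{X}_{\boldsymbol{R}^{\text{new}}}\vert = \prod_{X\in\boldsymbol{R}^{\text{new}}}\vert\range(X)\vert$ holds by the definition of $\mathcal{X}$ as a Cartesian product. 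Positivity of $c$ follows because every $c_j$ is positive.

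The main obstacle is bookkeeping rather than mathematics. One point is to make the projection-compatibility claim precise: cross variables in an extended factor are ignored by the \ac{lx} identity, while the orig-part is consistent with the global assignment. The other is to ensure that the index set of $c$ ranges over all of $\boldsymbol{\Phi}^{\text{x}}$, with each $c_j$ well defined. For \acp{lx} this uses the bijection $\eta$, so each extended factor is paired with a unique original factor and its constant. I would state these two facts as short preliminary observations before the computation.
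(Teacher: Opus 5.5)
Your proposal is correct and follows essentially the paper's argument. Both substitute the Laplace-extension and Laplace-factor identities into the product of factors, pull out $c$, split the sum over $\mathcal{X}_{\boldsymbol{R}^{\text{x}}}=\mathcal{X}_{\boldsymbol{R}^{\text{orig}}}\times\mathcal{X}_{\boldsymbol{R}^{\text{new}}}$, and count the $\vert\mathcal{X}_{\boldsymbol{R}^{\text{new}}}\vert$ identical inner terms. The paper instead starts from the normalisation $1=\sum_{\boldsymbol{r}^{\text{x}}}P_{M^{\text{x}}}(\boldsymbol{r}^{\text{x}})$ and solves for $Z^{\text{x}}$, whereas you compute $Z^{\text{x}}$ directly after first establishing $\psi^{\text{x}}=c\,\psi$ pointwise, which is a cosmetic difference that makes the projection and reindexing bookkeeping more explicit.
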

\begin{proof}[Proof Sketch]
    Due to normalisation of probability measures, we get 
	\begin{align*}
		\hspace{-0.7cm}1 &=\sum_{\boldsymbol{r}^{\text{x}}\in\mathcal{X}_{\boldsymbol{R}^{\text{x}}}}P_{M^{\text{x}}}(\boldsymbol{r}^{\text{x}})\\
        &\overset{\text{Lap.}}{=}\frac{1}{Z^{\text{x}}}\sum_{\boldsymbol{r}^{\text{x}}\in\mathcal{X}_{\boldsymbol{R}^\text{x}}}\prod_{\phi_i^{\text{x}}\in \boldsymbol{\Phi}^{\text{orig},\text{x}}}^{} c_i\cdot \phi_i(\boldsymbol{r}^{\text{orig}}_i)\prod_{\phi_j^{\text{x}}\in \boldsymbol{\Phi}^{\text{new}}}^{} c_j\\
        &\overset{\text{ind.}}{=}\frac{1}{Z^{\text{x}}}\prod_{\phi_j^{\text{x}}\in \boldsymbol{\Phi}^{\text{x}}}^{} c_j\hspace*{-1em}
        \sum_{\boldsymbol{r}^{\text{orig}}\in\mathcal{X}_{\boldsymbol{R}^{\text{orig}}}}\sum_{\boldsymbol{r}^{\text{new}}\in\mathcal{X}_{\boldsymbol{R}^{\text{new}}} }\prod_{\phi_i^{\text{x}}\in \boldsymbol{\Phi}^{\text{orig},\text{x}}}^{}\hspace*{-0.5em}\phi_i(\boldsymbol{r}^{\text{orig}}_i)\\
           &=\frac{1}{Z^{\text{x}}}\cdot c \cdot \vert \mathcal{X}_{\boldsymbol{R}^{\text{new}}}\vert 
        \sum_{\boldsymbol{r}^{\text{orig}}\in\mathcal{X}_{\boldsymbol{R}^{\text{orig}}}}\prod_{\phi_i\in \boldsymbol{\Phi}^{\text{orig}}}^{}\phi_i(\boldsymbol{r}^{\text{orig}}_i)\\
        &\overset{\text{def.}}{=}\frac{1}{Z^{\text{x}}}\cdot c
         \cdot \vert \mathcal{X}_{\boldsymbol{R}^{\text{new}}}\vert \cdot Z\\
 &\Leftrightarrow Z^{\text{x}}= c  \cdot \vert \mathcal{X}_{\boldsymbol{R}^{\text{new}}}\vert \cdot Z
        =c  \cdot\hspace{-0.25cm} \prod_{X\in \boldsymbol{R}^{\text{new}}}^{}\vert \range(X)\vert \cdot Z\tag*{\qedsymbol}
        \end{align*}
       \renewcommand{\qedsymbol}{}
\end{proof}\vspace{-0.5cm}

This characterisation shows that the change in the partition function induced by an \ac{lx} decomposes into a purely combinatorial term, determined by the cardinalities of the newly introduced variables, and rescaling scalars $c_i \in\mathbb{R}_{> 0}$ induced by \acp{lf} and \acp{lx}.
Consequently, the effect of the extension on normalisation does not depend on a specific assignment of the original \acp{rv}.
For the general case, where we do not have a constant influence on a factor, this is not necessarily true. 
The upcoming corollary follows directly from \cref{theorem:generalpartitionfunction}.
\begin{corollary}
    If $M^{\text{x}}$ is an \ac{lx} of $M$ and $\boldsymbol{R}^{\text{new}}=\emptyset$, then $Z^{\text{x}} = c \cdot Z$, where $c \in \mathbb{R}_{> 0}$ is a scalar.
\end{corollary}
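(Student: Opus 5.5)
This is a direct specialisation of \cref{theorem:generalpartitionfunction}. We take the theorem's identity
\begin{align*}
    Z^{\text{x}} = c \cdot \prod_{X\in\boldsymbol{R}^{\text{new}}}\vert\range(X)\vert \cdot Z
\end{align*}
and substitute $\boldsymbol{R}^{\text{new}}=\emptyset$. The product over $X\in\boldsymbol{R}^{\text{new}}$ is then an empty product, which equals $1$ by convention. Equivalently, $\mathcal{X}_{\emptyset}$ is the Cartesian product of no ranges, so it contains exactly one element, the empty assignment, and $\vert\mathcal{X}_{\boldsymbol{R}^{\text{new}}}\vert = 1$. This leaves $Z^{\text{x}} = c\cdot Z$, with $c=\prod_{\phi_i^{\text{x}}\in\boldsymbol{\Phi}^{\text{x}}} c_i$ exactly as defined in the theorem.

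It remains to check that $c$ is a positive real scalar. Each $c_i$ lies in $\mathbb{R}_{>0}$ by the definitions of an \ac{lf} and an \ac{lx}. Since $\boldsymbol{\Phi}^{\text{x}}$ is finite, $c$ is a finite product of positive reals and hence lies in $\mathbb{R}_{>0}$. If $\boldsymbol{\Phi}^{\text{x}}$ were empty, $c=1$, which is still positive.

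The only point needing any care is the empty-product/empty-set convention: the sum over $\boldsymbol{r}^{\text{new}}\in\mathcal{X}_{\boldsymbol{R}^{\text{new}}}$ in the proof of \cref{theorem:generalpartitionfunction} must be a single term, not zero terms. This holds because $\mathcal{X}_\emptyset$ contains the empty assignment. With that settled, the scalar $c$ is independent of the assignment of the original \acp{rv}, and the corollary follows.
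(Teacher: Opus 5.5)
Your proof is correct and matches the paper's argument: specialise \cref{theorem:generalpartitionfunction} to $\boldsymbol{R}^{\text{new}}=\emptyset$ and use $\vert\mathcal{X}_{\emptyset}\vert=1$. Your constant $c=\prod_{\phi_i^{\text{x}}\in\boldsymbol{\Phi}^{\text{x}}}c_i$ is in fact the accurate one. The paper's proof writes the product over $\boldsymbol{\Phi}^{\text{orig}}$ only, which drops the constants of new Laplace factors over original variables; such factors can still be present when $\boldsymbol{R}^{\text{new}}=\emptyset$.
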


This implies that, for a constructed \ac{lx}, the rescaling constants can be chosen explicitly such that the resulting normalisation depends solely on the involved cardinalities.
This is particularly convenient for a Laplace \ac{fg}, which may therefore serve as a natural reference.

\begin{corollary}\label{corollary:lfg}
    Let $M$ be a Laplace \ac{fg} and let $\kappa_X:=\lvert\{\phi_i: X\in \boldsymbol{R}^{\text{orig}}_{(i)}\}\rvert$ denote the number of factors in which \ac{rv} $X$ appears.
    Then, $M$ can equivalently be represented by scaling all factors $\phi_i$ in $M$ such that
    \begin{align}
        \phi_i(\boldsymbol{r}_i^{\text{orig}})=\prod_{X\in \boldsymbol{R}^{\text{orig}}_{(i)}}\vert \range(X) \vert ^{- 1/\kappa_X } \label{eq:lfg}
    \end{align}
    for all assignments $\boldsymbol{r}^{\text{orig}}_i\in\mathcal{X}_{\boldsymbol{R}^{\text{orig}}_{(i)}}$ and with $Z=1$. 
\end{corollary}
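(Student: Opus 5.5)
Since $M$ is a Laplace \ac{fg}, every factor $\phi_i$ is constant, say $\phi_i\equiv c_i$. The joint potential is therefore the constant $\psi(\boldsymbol r)=\prod_i c_i$, and $P_M$ is the discrete uniform distribution on $\mathcal{X}_{\boldsymbol R}$, whatever the $c_i$ are. So I would first note that any rescaling of the factors by positive constants leaves $P_M$ unchanged; this is the content of \cref{theorem:generalpartitionfunction} applied to the trivial extension with $\boldsymbol R^{\text{new}}=\emptyset$, which gives $Z' = c\cdot Z$ and hence the same normalised measure. In particular, replacing each $c_i$ by the value in \eqref{eq:lfg} yields an equivalent representation, because the right-hand side of \eqref{eq:lfg} does not depend on $\boldsymbol r_i^{\text{orig}}$ and so each rescaled factor is again an \ac{lf}.

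It remains to show that with this choice $Z=1$. I would compute
\begin{align*}
Z=\sum_{\boldsymbol r\in\mathcal{X}_{\boldsymbol R}}\prod_{\phi_i}\prod_{X\in\boldsymbol R_{(i)}^{\text{orig}}}\vert\range(X)\vert^{-1/\kappa_X}
=\vert\mathcal{X}_{\boldsymbol R}\vert\cdot\prod_{X\in\boldsymbol R}\prod_{i:X\in\boldsymbol R^{\text{orig}}_{(i)}}\vert\range(X)\vert^{-1/\kappa_X}.
\end{align*}
The key step is exchanging the double product over factors and their scope variables into a product over variables and the factors containing them. By the definition of $\kappa_X$, each $X$ contributes exactly $\kappa_X$ copies of $\vert\range(X)\vert^{-1/\kappa_X}$, which multiply to $\vert\range(X)\vert^{-1}$. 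Hence $Z=\vert\mathcal{X}_{\boldsymbol R}\vert\prod_X\vert\range(X)\vert^{-1}=1$.

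The only delicate point is a variable $X$ that lies in no factor's scope, so that $\kappa_X=0$ and the exponent $-1/\kappa_X$ is undefined. Such an $X$ contributes nothing to the product, but its range still enters $\vert\mathcal{X}_{\boldsymbol R}\vert$, and then $Z\neq 1$. I would exclude this case by assuming, as the notation of \cref{section:three} implicitly does via $\boldsymbol R^{\text{orig}}=\cup_i\boldsymbol R^{\text{orig}}_{(i)}$, that every \ac{rv} appears in at least one factor, so $\kappa_X\ge 1$. This is the main obstacle to state cleanly; the rest is routine.
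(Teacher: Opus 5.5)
Your proof is correct and follows the paper's route: fix each constant as in \cref{eq:lfg}, pull the constant joint potential out of the sum to obtain $\vert\mathcal{X}_{\boldsymbol R}\vert$ times the double product, and regroup by variables so that each $X$ contributes exactly $\vert\range(X)\vert^{-1}$, giving $Z=1$. Your explicit assumption $\kappa_X\ge 1$ (which the paper's regrouping step uses silently) and your justification of ``equivalently'' via the uniformity of $P_M$ are welcome refinements; just note that the rescaled graph is not the paper's \emph{trivial} extension (which keeps $\boldsymbol{\Phi}^{\text{orig},\text{x}}=\boldsymbol{\Phi}^{\text{orig}}$) but an \ac{lx} with $\boldsymbol R^{\text{new}}=\emptyset$, and the uniformity observation alone already suffices.
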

%
\begin{proof}[Proof Sketch]
    Since $M$ is a Laplace \ac{fg}, all factors are Laplace, which means every factor $\phi_i$ has the same value for all its potentials and there exists a constant $c_i\in \mathbb{R}_{>0}$ such that 
        $\phi_{i}(\boldsymbol{r}^{\text{orig}}_{i})=c_i\text{ for all }\boldsymbol{r}^{\text{orig}}_i\in \mathcal{X}_{\boldsymbol{R}^{\text{orig}}_{(i)}}$.
    We choose $c_i:=\prod_{X\in \boldsymbol{R}^{\text{orig}}_{(i)}}\vert X\vert ^{- 1/\kappa_X } $ for the $i$-th factorand and use the independence of the sets to show that the choice normalises $Z$ to $1$.
\end{proof}
\cref{eq:lfg} can be interpreted as the relative mass or weight of an individual factor with respect to the entire \ac{fg}.
In addition, for any given \ac{fg} $M$, there always exists a Laplace \ac{fg} of the form given in \cref{corollary:lfg} that has the same structure as $M$, while being Laplace, and thus can serve as a natural reference for comparison.
Although this construction may appear unnecessarily elaborate compared to ignoring normalisation altogether, it enables a principled comparison of the influence of individual factors relative to the remainder of the \ac{fg}.
Specifically, it allows deviations of each factor from its Laplace counterpart to be quantified on a common scale.

\paragraph{Surjection.}
We now formalise the fact that an \ac{lx} induces a surjective, measure-preserving projection from the extended probability space onto the original one.
By extending the state space $\mathcal{X}_{\boldsymbol{R}^{\text{orig}}}$ to $\mathcal{X}_{\boldsymbol{R}^{\text{x}}}=\mathcal{X}_{\boldsymbol{R}^{\text{orig}}}\times \mathcal{X}_{\boldsymbol{R}^{\text{new}}}$, the canonical projection 
\begin{align*}
    \pi \colon \mathcal{X}_{\boldsymbol{R}^{\text{x}}}&\mapsto\mathcal{X}_{\boldsymbol{R}^{\text{orig}}}\text{ with }(\boldsymbol{r}^{\text{orig}},\boldsymbol{r}^{\text{new}})\mapsto \boldsymbol{r}^{\text{orig}}
\end{align*}
is surjective with preimage
\begin{align*}
    \pi^{-1}(\boldsymbol{r}^{\text{orig}})= \bigcup_{\boldsymbol{r}^{\text{new}}\in\mathcal{X}_{\boldsymbol{R}^\text{new}}}\{(\boldsymbol{r}^{\text{orig}},\boldsymbol{r}^{\text{new}})\}.
\end{align*}
The original probability distribution $P_M$ can thus be written as a composition, as shown in the upcoming theorem.
\begin{theorem}\label{theorem:surjection}
    Let $M^{\text{x}}$ be an \ac{lx} of an \ac{fg} $M$. Then,
    \begin{align}
              P_M= P_{M^{\text{x}}}\;\circ\;\pi^{-1}. \label{equ:surjection}
    \end{align}
\end{theorem}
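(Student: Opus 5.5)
The plan is to verify \cref{equ:surjection} pointwise on singletons and then pass to arbitrary events. Since the $\sigma$-algebra is the power set of the finite set $\mathcal{X}_{\boldsymbol{R}^{\text{orig}}}$, two probability measures agree as soon as they agree on every singleton $\{\boldsymbol{r}^{\text{orig}}\}$. For a general set $A$, both sides are sums over singletons: $\pi^{-1}(A)=\dot\bigcup_{\boldsymbol{r}^{\text{orig}}\in A}\pi^{-1}(\boldsymbol{r}^{\text{orig}})$, and the fibres are pairwise disjoint, so countable (here finite) additivity reduces the claim to the singleton case.

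For a fixed $\boldsymbol{r}^{\text{orig}}$, I would use the explicit preimage stated above, $\pi^{-1}(\boldsymbol{r}^{\text{orig}})=\{(\boldsymbol{r}^{\text{orig}},\boldsymbol{r}^{\text{new}}):\boldsymbol{r}^{\text{new}}\in\mathcal{X}_{\boldsymbol{R}^{\text{new}}}\}$. This gives
\[
P_{M^{\text{x}}}(\pi^{-1}(\boldsymbol{r}^{\text{orig}}))=\frac{1}{Z^{\text{x}}}\sum_{\boldsymbol{r}^{\text{new}}\in\mathcal{X}_{\boldsymbol{R}^{\text{new}}}}\prod_{\phi_i^{\text{x}}\in\boldsymbol{\Phi}^{\text{x}}}\phi_i^{\text{x}}(\boldsymbol{r}^{\text{x}}_i).
\]
By the definition of an \ac{lx}, each $\phi_i^{\text{x}}\in\boldsymbol{\Phi}^{\text{orig},\text{x}}$ equals $c_i\,\phi_i(\boldsymbol{r}^{\text{orig}}_i)$. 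Each $\phi_i^{\text{x}}\in\boldsymbol{\Phi}^{\text{new}}$ equals the constant $c_i$. The summand therefore equals $c\cdot\psi(\boldsymbol{r}^{\text{orig}})$, which is independent of $\boldsymbol{r}^{\text{new}}$. Note that cross variables belong to $\boldsymbol{R}^{\text{orig}}$ and are fixed by $\boldsymbol{r}^{\text{orig}}$, while new variables are ignored by the factors. The sum then equals $c\,\lvert\mathcal{X}_{\boldsymbol{R}^{\text{new}}}\rvert\,\psi(\boldsymbol{r}^{\text{orig}})$.

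Substituting $Z^{\text{x}}=c\,\lvert\mathcal{X}_{\boldsymbol{R}^{\text{new}}}\rvert\,Z$ from \cref{theorem:generalpartitionfunction} cancels the constants and leaves $\psi(\boldsymbol{r}^{\text{orig}})/Z=P_M(\boldsymbol{r}^{\text{orig}})$, which is the singleton case.

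The main obstacle is bookkeeping rather than analysis. One point is ensuring that the identification $\mathcal{X}_{\boldsymbol{R}^{\text{x}}}=\mathcal{X}_{\boldsymbol{R}^{\text{orig}}}\times\mathcal{X}_{\boldsymbol{R}^{\text{new}}}$ is compatible with the factor-wise partition $(\boldsymbol{r}^{\text{orig}}_i,\boldsymbol{r}^{\text{cross}}_i,\boldsymbol{r}^{\text{new}}_i)$. The other is confirming that the Laplace property makes every factor's value depend only on $\boldsymbol{r}^{\text{orig}}_i$ (for extended factors) or on nothing (for new factors). Once this is checked, the computation is exactly the one in the proof of \cref{theorem:generalpartitionfunction}, but without the outer sum over $\boldsymbol{r}^{\text{orig}}$.
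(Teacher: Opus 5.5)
Your proposal is correct and follows essentially the same route as the paper's proof. Both arguments expand $P_{M^{\text{x}}}(\pi^{-1}(\boldsymbol{r}^{\text{orig}}))$ as a finite sum over the fibre, use the Laplace property to replace each factor by $c_i\,\phi_i(\boldsymbol{r}^{\text{orig}}_i)$ or by $c_j$, and cancel against $Z^{\text{x}}=c\,\lvert\mathcal{X}_{\boldsymbol{R}^{\text{new}}}\rvert\,Z$ from \cref{theorem:generalpartitionfunction}. Your extra step, passing from singletons to arbitrary events via the disjointness of the fibres, is a small but welcome completion that the paper leaves implicit.
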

One interpretation of \cref{equ:surjection} is that the probability measure $P_M$ is preserved under \ac{lx} and every assignment within the original (smaller) probability space is represented as the preimage of the surjective projection $\pi$ in $\mathcal{P}(\mathcal{X}_{\boldsymbol{R}^{\text{x}}})$ as follows:
\begin{align*}
   P_{M^{\text{x}}}\left(\pi^{-1}\left(\boldsymbol{r}^{\text{orig}}\right) \right)= P_M(\boldsymbol{r}^{\text{orig}})\text{ for all }\boldsymbol{r}^{\text{orig}}\in\mathcal{X}_{\boldsymbol{R}^{\text{orig}}}.
\end{align*}
\begin{proof}
It is sufficient to apply equality transformations using the given properties and \cref{theorem:generalpartitionfunction}.
\begin{align*}
P&_{M^{\text{x}}}(\pi^{-1}\left(\boldsymbol{r}^{\text{orig}}\right))
        \overset{\text{dis.}}{=} \sum_{\boldsymbol{r}^{\text{new}}\in\mathcal{X}_{\boldsymbol{R}^{\text{new}}}} P_{M^{\text{x}}}\left(\{(\boldsymbol{r}^{\text{orig}},\boldsymbol{r}^{\text{new}})\}\right)\\
	    &\overset{\text{def.}}{=}\frac{1}{Z^{\text{x}}} \sum_{\boldsymbol{r}^{\text{new}}\in\mathcal{X}_{\boldsymbol{R}^{\text{new}}}}\prod_{\phi_{i}^{\text{x}}\in \boldsymbol{\Phi}^{\text{x}}}^{}\phi_{i}^{\text{x}}\left((\boldsymbol{r}^{\text{orig}}_i,\boldsymbol{r}^{\text{new}}_i)\right)\\
	&\overset{\text{Lapl.}}{=} \frac{1}{Z^{\text{x}}}\sum_{\boldsymbol{r}^{\text{new}}\in\mathcal{X}_{\boldsymbol{R}^{\text{new}}}}\prod_{\phi_{i}\in \boldsymbol{\Phi}^{\text{orig}}}\phi_{i}(\boldsymbol{r}^{\text{orig}}_i)\cdot c_i\cdot \hspace{-0.2cm}\prod_{\phi_{j}^{\text{x}}\in \boldsymbol{\Phi}^{\text{new}}}^{}\hspace*{-0.5em} c_j\\
    &\overset{\text{Th.~\ref{theorem:generalpartitionfunction}}}{=}\frac{\sum_{\boldsymbol{r}^{\text{new}}\in\mathcal{X}_{\boldsymbol{R}^{\text{new}}}}1}{Z\cdot \vert \mathcal{X}_{\boldsymbol{R}^{\text{new}}}\vert} \prod_{\phi_{i}\in \boldsymbol{\Phi}^{\text{orig}}}\phi_{i}(\boldsymbol{r}^{\text{orig}}_i)\\
    &\overset{\text{def.}}{=} \frac{\vert \mathcal{X}_{\boldsymbol{R}^{\text{new}}}\vert}{\vert \mathcal{X}_{\boldsymbol{R}^{\text{new}}}\vert}\;\cdot P_M\left(\boldsymbol{r}^{\text{orig}}\right)=P_M\left(\boldsymbol{r}^{\text{orig}}\right).  \tag*{\qedsymbol} 
\end{align*}
\renewcommand{\qedsymbol}{}
\end{proof}
\vspace{-0.5cm}
If an \ac{fg} is extended solely by additional Laplace factors, or by \acp{lx} involving only \acp{rv} already present in the underlying \ac{ms}, then the probability space itself remains unchanged.
In contrast, the introduction of a new \ac{rv} necessarily enlarges the \ac{ms} via the Cartesian product of the original space with the domain of the new variable, and the associated $\sigma$-algebra expands accordingly as its power set.

Nevertheless, the resulting probability measure is pro\-jection-preserving regarding the original (smaller) probability space: for every event in $\mathcal{P}(\mathcal{X}_{\boldsymbol{R}^{\text{orig}}})$ there exists a corresponding preimage in $\mathcal{P}(\mathcal{X}_{\boldsymbol{R}^{\text{x}}})$.
Hence, every measurable set of the original space has an origin in the extended space, and the mapping induced by marginalisation over $\boldsymbol{R}^{\text{new}}$ is surjective.

\section{Comparability of Factor Graphs}\label{section_four}
This section systematically works towards the comparability of two arbitrary \ac{fg} that have initially been defined over different \acp{ms}, by constructing a minimal number of \acp{lx} to lift the \acp{fg} to the same \ac{ms} while aligning their graphical structure, called \ac{msx}.
An extension allows for comparability from a theoretical viewpoint for common distance measures and divergence measures for probability distributions, but also allows due to the same graphical structure to compare them on factor-level, which is especially interesting for \acp{pgm}.

\subsection{Factor Structures}
To identify common structure between factor graphs, we introduce four notions that characterise how individual factors relate across graphs and within extensions.

\begin{definition}\label{def:struciden}
    Factors $\phi_i$ of \ac{fg} $M$ and $\phi'_j$ of \ac{fg} $M'$ are \textbf{structurally identical} if and only if $\boldsymbol{R}^{\text{orig}}_{(i)}=\boldsymbol{R'}^{\text{orig}}_{(j)}$.
    Two \acp{fg} $M$ and $M'$ are \textbf{structurally identical} if and only if there exists a bijection $\beta \colon \boldsymbol{\Phi}^{\text{orig}}_{M} \to \boldsymbol{\Phi}^{\text{orig}}_{M'}$ from factors $\phi_i \in \boldsymbol{\Phi}^{\text{orig}}_{M}$ in $M$ to factors $\phi'_j \in \boldsymbol{\Phi}^{\text{orig}}_{M'}$ in $M'$ such that $\boldsymbol{R}^{\text{orig}}_{(i)}=\boldsymbol{R'}^{\text{orig}}_{(j)}$ holds.  
\end{definition}

Structural identity captures exact agreement of factor scopes, the opposite can be described as independence, formalising the absence of shared variables and allowing for extensions that do not interact with the original factorisation.

\begin{definition}
	Two factors $\phi_i$ and $\phi_j$ with $i\neq j$ 
    are called \textbf{independent} if and only if $\boldsymbol{R}^{\text{orig}}_{(i)} \cap \boldsymbol{R}^{\text{orig}}_{(j)} =\emptyset$.
    Let $M^{\text{x}}$ be an extension of $M$ and let $\phi_j^{\text{x}}\in\boldsymbol{\Phi}^{\text{new}}$.
    The factor $\phi_j^{\text{x}}\in\boldsymbol{\Phi}^{\text{new}}$ is called an \textbf{independent extension} of $M$ if it is independent of all extended factors $\phi_i^{\text{x}}\in\boldsymbol{\Phi}^{\text{orig},\text{x}}$ of the original \ac{fg}~$M$.
\end{definition}

\begin{lemma}
    Let $M^{\text{x}}$ be a factor graph extension of $M$.
    If every factor $\phi^{\text{x}}_i\in \boldsymbol{\Phi}^{\text{new}}$ is an independent extension of $M$ and $\boldsymbol{\Phi}^{\text{orig},\text{x}}=\boldsymbol{\Phi}^{\text{orig}}$, then the partition function of the extended \ac{fg} $M^{\text{x}}$ is given by
	\begin{align*}
		Z^{\text{x}} =Z \cdot  \sum_{\boldsymbol{r}^{\text{new}}\in\mathcal{X}_{\boldsymbol{R}^{\text{new}}}}\prod_{\phi_i^{\text{x}}\in\boldsymbol{\Phi}^{\text{new}}} \phi_i^{\text{x}}(\boldsymbol{r}_i^{\text{new}}).
	\end{align*}
\end{lemma}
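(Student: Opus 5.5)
The plan is to compute $Z^{\text{x}}$ directly from its definition as a sum over $\mathcal{X}_{\boldsymbol{R}^{\text{x}}}$, mirroring the proof of \cref{theorem:generalpartitionfunction}. Since $\boldsymbol{\Phi}^{\text{orig},\text{x}}=\boldsymbol{\Phi}^{\text{orig}}$, every original factor keeps its scope $\boldsymbol{R}^{\text{orig}}_{(i)}\subseteq\boldsymbol{R}^{\text{orig}}$. The first step is to split the joint potential of $M^{\text{x}}$ as
\begin{align*}
	\psi^{\text{x}}(\boldsymbol{r}^{\text{orig}},\boldsymbol{r}^{\text{new}})=\prod_{\phi_i\in\boldsymbol{\Phi}^{\text{orig}}}\phi_i(\boldsymbol{r}^{\text{orig}}_i)\cdot\prod_{\phi_j^{\text{x}}\in\boldsymbol{\Phi}^{\text{new}}}\phi_j^{\text{x}}(\boldsymbol{r}^{\text{x}}_j),
\end{align*}
and to use the state-space decomposition $\mathcal{X}_{\boldsymbol{R}^{\text{x}}}=\mathcal{X}_{\boldsymbol{R}^{\text{orig}}}\times\mathcal{X}_{\boldsymbol{R}^{\text{new}}}$ to write $Z^{\text{x}}$ as a double sum over $\boldsymbol{r}^{\text{orig}}$ and $\boldsymbol{r}^{\text{new}}$.

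The key step, and the one I expect to be the only real obstacle, is to show that each new factor depends only on new variables, so that $\phi_j^{\text{x}}(\boldsymbol{r}^{\text{x}}_j)=\phi_j^{\text{x}}(\boldsymbol{r}^{\text{new}}_j)$, i.e.\ $\boldsymbol{R}^{\text{cross}}_{(j)}=\emptyset$ for all $\phi_j^{\text{x}}\in\boldsymbol{\Phi}^{\text{new}}$. Independence of the extension gives $\scope(\phi_j^{\text{x}})\cap\boldsymbol{R}^{\text{orig}}_{(i)}=\emptyset$ for every original factor $\phi_i$. Here I would use that, by the paper's convention, $\boldsymbol{R}^{\text{orig}}=\bigcup_i\boldsymbol{R}^{\text{orig}}_{(i)}$, so every original \ac{rv} lies in some original scope and hence cannot appear in $\phi_j^{\text{x}}$. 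If an original \ac{rv} appeared in no factor, it could not lie in $\boldsymbol{R}^{\text{cross}}_{(j)}$ anyway, since the cross set is defined as \acp{rv} occurring in another original factor. Either way, the scope of $\phi_j^{\text{x}}$ lies in $\boldsymbol{R}^{\text{new}}$.

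With this separation in hand, the summand factorises into a function of $\boldsymbol{r}^{\text{orig}}$ times a function of $\boldsymbol{r}^{\text{new}}$. Distributivity of the finite double sum then gives
\begin{align*}
	Z^{\text{x}}=\Bigl(\sum_{\boldsymbol{r}^{\text{orig}}}\prod_{\phi_i\in\boldsymbol{\Phi}^{\text{orig}}}\phi_i(\boldsymbol{r}^{\text{orig}}_i)\Bigr)\cdot\Bigl(\sum_{\boldsymbol{r}^{\text{new}}}\prod_{\phi_j^{\text{x}}\in\boldsymbol{\Phi}^{\text{new}}}\phi_j^{\text{x}}(\boldsymbol{r}^{\text{new}}_j)\Bigr).
\end{align*}
By \cref{def:fg}, the first factor is $Z$, which yields the claim. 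As a sanity check, when every new factor is an \ac{lf}, the second factor reduces to $c\cdot|\mathcal{X}_{\boldsymbol{R}^{\text{new}}}|$, consistent with \cref{theorem:generalpartitionfunction}.
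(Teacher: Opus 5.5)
Your proposal is correct and follows essentially the same route as the paper: split the joint potential into original and new factors, separate $\mathcal{X}_{\boldsymbol{R}^{\text{x}}}=\mathcal{X}_{\boldsymbol{R}^{\text{orig}}}\times\mathcal{X}_{\boldsymbol{R}^{\text{new}}}$ using independence, factor the double sum by distributivity, and recognise the first factor as $Z$. The differences are cosmetic: the paper starts from the normalisation $1=\sum_{\boldsymbol{r}^{\text{x}}}P_{M^{\text{x}}}(\boldsymbol{r}^{\text{x}})$ rather than from the definition of $Z^{\text{x}}$, and it compresses into a single ``ind.'' step the fact that new factors contain no original variables, which you justify explicitly (reading independence, as intended, as disjointness of the new factor's full scope from every original scope).
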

\begin{proof}[Proof Sketch]
 Due to normalisation of probability measures, we get $1= \sum_{\boldsymbol{r}^{\text{x}}\in \mathcal{X}_{\boldsymbol{R}^{\text{x}}}}P_{M^{\text{x}}}(\boldsymbol{r}^{\text{x}})$.
 Using independence of the different sets, rearranging the terms yields $1=Z/Z^{\text{x}} \cdot\sum_{\boldsymbol{r}^{\text{new}}\in \mathcal{X}_{\boldsymbol{R}^{\text{new}}}}\prod_{\phi_i^{\text{x}}\in \boldsymbol{\Phi}^{\text{new}}}\phi_i^{\text{x}}(\boldsymbol{r}^{\text{new}}_i)$.
\end{proof}

For scope inclusion, we define the following strictly hierarchical relation between factors.
\begin{definition}
	A factor $\phi_{i}$ of \ac{fg} $M$ is \textbf{subsumed} in another factor $\phi'_j$ of \ac{fg} $M'$ if and only if $\boldsymbol{R}^{\text{orig}}_{(i)}\subsetneq \boldsymbol{R'}_{\kern-0.4em (j)}^{\text{orig}}$.
\end{definition}
Overlapping factors constitute the most complex case, as they are neither independent nor related by subsumption.
\begin{definition}
	A factor $\phi_{i}$ of \ac{fg} $M$ and factor $\phi'_j$ of \ac{fg} $M'$ are \textbf{overlapping factors} if and only if they have partially overlapping scopes, that is, if
    $\boldsymbol{R}_{(i)}^{\text{orig}}\cap \boldsymbol{R'}^{\text{orig}}_{\kern-0.4em (j)}\neq \emptyset$ with $\boldsymbol{R}^{\text{orig}}_{(i)}\setminus\boldsymbol{R'}^{\text{orig}}_{\kern-0.4em (j)}\neq \emptyset$ and $\boldsymbol{R'}^{\text{orig}}_{\kern-0.4em (j)}\setminus \boldsymbol{R}^{\text{orig}}_{(i)}\neq \emptyset$ holds.
\end{definition}

\subsection{Minimal Structural Laplace Extension}
Without loss of generality, we assume that we have a unique factor representation for a given \ac{fg} $M$, in which structurally identical as well as subsumed factors are merged into a single factor.
Formally, for any $\phi_i, \phi_j \in \boldsymbol{\Phi}^{\text{orig}}$, we have
\begin{align*}
    \boldsymbol{R}^{\text{orig}}_{(i)} \neq \boldsymbol{R}^{\text{orig}}_{(j)} \; \text{and} \;
    \boldsymbol{R}^{\text{orig}}_{(i)} \not\subsetneq \boldsymbol{R}^{\text{orig}}_{(j)}.
\end{align*}
Overlapping factors are allowed and considered part of the unique representation.
 \begin{algorithm}[tb]
    \caption{MSLX (Min. Struct. Lapl. Ext.)}\label{algo:msx}
     \textbf{Input}: Two \acp{fg}: $M$ and $M'$\\
     \textbf{Output}: Two extended \acp{fg} $M^{\text{x}}$ and $M'^{\text{x}}$

    \begin{algorithmic}[1] 
        \State Extended sets of \acp{rv} $\boldsymbol{R}^{\text{x}},\boldsymbol{R'}^{\text{x}} \leftarrow \boldsymbol{R}^{\text{orig}} \cup \boldsymbol{R'}^{\text{orig}}$
        \State Initialise extended factor sets\\
            $\;\boldsymbol{\Phi}^{\text{orig},\text{x}},\boldsymbol{\Phi}^{\text{new}},\boldsymbol{\Phi'}^{\text{orig},\text{x}},\boldsymbol{\Phi'}^{\text{new}} \leftarrow \{\}$
        \State Initialise index sets\\
        $\; I_M \leftarrow \{1,\ldots,\lvert\boldsymbol{\Phi}^{\text{orig}} \rvert\}$ and $I_{M'} \leftarrow \{1,\ldots,\lvert\boldsymbol{\Phi'}^{\text{orig}} \rvert\}$
        \For{$i \in I_M$, $j \in I_{M'}$}
            \If{$\phi_i$ is \textit{independent} of all $\phi'_j$}
                \State Add $\phi_i$ as \ac{lf} to $\boldsymbol{\Phi'}^{\text{new}}$
                \State Remove $i$ from $I_M$ 
            \EndIf 
            \If{$\phi'_j$ is \textit{independent} of all $\phi_i$}
                \State Add $\phi'_j$ as \ac{lf} to $\boldsymbol{\Phi}^{\text{new}}$
                \State Remove $j$ from $I_{M'}$
            \EndIf
            \If{$\phi_i$ and $\phi_j$ are \textit{structurally identical}}
                \State Add $\phi_i$ to $\boldsymbol{\Phi}^{\text{orig},\text{x}}$, $\phi'_j$ to $\boldsymbol{\Phi'}^{\text{orig},\text{x}}$
                \State Remove $i$ from $I_M$, $j$ from $I_{M'}$ 
            \EndIf
            \If{$\phi_i$ is subsumed in $\phi'_j$}
                \State Add \ac{lx} of $\phi_i$ over $\boldsymbol{R'}^{\text{orig}}_{\kern-0.4em(j)}$ to $\boldsymbol{\Phi}^{\text{orig},\text{x}}$, $\phi'_j$ to $\boldsymbol{\Phi'}^{\text{orig},\text{x}}$
                \State Remove $i$ from $I_M$ and $j$ from $I_{M'}$
            \EndIf
            \If{$\phi'_j$ is subsumed in $\phi_i$}
                \State Add $\phi_i$ to $\boldsymbol{\Phi}^{\text{orig},\text{x}}$, \ac{lx} of $\phi'_j$ over $\boldsymbol{R}^{\text{orig}}_{(i)}$ to $\boldsymbol{\Phi'}^{\text{orig},\text{x}}$
                \State Remove $i$ from $I_M$, $j$ from $I_{M'}$
            \EndIf
            \If{$\phi_i$ and $\phi'_j$ are \textit{overlapping}}
                \State Add \ac{lx} of $\phi_i$ over $\boldsymbol{R}^{\text{orig}}_{(i)}\cup \boldsymbol{R'}^{\text{orig}}_{(j)}$ to $\boldsymbol{\Phi}^{\text{orig},\text{x}}$,\newline
                \hspace*{1.5em}$\phi'^{\text{x}}_j$ as \ac{lx} of $\phi'_j$ over $\boldsymbol{R}^{\text{orig}}_{(i)}\cup \boldsymbol{R'}^{\text{orig}}_{(j)}$ to $\boldsymbol{\Phi'}^{\text{orig},\text{x}}$
            \EndIf
        \EndFor
        \State Construct $\boldsymbol{E}^{\text{x}}$ and $\boldsymbol{E'}^{\text{x}}$ induced by scopes of factors
    \end{algorithmic}
\end{algorithm}

\Ac{msx}, shown in \cref{algo:msx}, constructs \acp{lx} for two \acp{fg}, aligning them on the same graph structure and \ac{ms}.
\cref{algo:msx} constructs an extension on the \textit{smallest} common \ac{ms} by adding the \textit{minimal} set of variables required to achieve structural equality of corresponding factors across both graphs. 
Formally, an extension is minimal if no strictly smaller Laplace extension (measured in terms of added variables and induced scopes) yields the same aligned structure.
Each factor is extended at most once and only when necessary, i.e., when no counterpart with identical scope exists in the other graph.
While arbitrary large Laplace extensions are always possible, \cref{algo:msx} avoids any non-essential augmentation by construction.
Minimality is thus not defined in terms of the number of factors, but as the minimal structural completion required for consistent local comparison, using only Laplace extensions and without merging factors.
Any omission would prevent alignment, whereas any additional variable would strictly enlarge scopes without improving comparability, thereby violating minimality.

By construction, we get the following result.
\begin{theorem}\label{theorem:structurallyidentical}
The \acp{fg} $M^{\text{x}}$ and $M'^{\text{x}}$ returned by \cref{algo:msx} for two input \acp{fg} $M$ and $M'$ are structurally identical and encode probability distributions on the same \ac{ms}
\begin{align*}
  \left(\mathcal{X}_{\boldsymbol{R}^{\text{orig}}\;\cup\;\boldsymbol{R'}^{\text{orig}}}, \mathcal{P}\left(\mathcal{X}_{\boldsymbol{R}^{\text{orig}}\;\cup\;\boldsymbol{R'}^{\text{orig}}}\right)\right).
\end{align*}
\end{theorem}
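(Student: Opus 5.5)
The proof splits into two parts. First, both outputs live on the same measurable space. Second, there is a scope-preserving bijection $\beta$ between $\boldsymbol{\Phi}^{\text{x}}$ and $\boldsymbol{\Phi'}^{\text{x}}$ as in \cref{def:struciden}.

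\textbf{Common measurable space.} This part is immediate from line~1 of \cref{algo:msx}. It sets $\boldsymbol{R}^{\text{x}}=\boldsymbol{R'}^{\text{x}}=\boldsymbol{R}^{\text{orig}}\cup\boldsymbol{R'}^{\text{orig}}$, so both extended \acp{fg} define measures on $\mathcal{X}_{\boldsymbol{R}^{\text{orig}}\cup\boldsymbol{R'}^{\text{orig}}}$ with the power set as $\sigma$-algebra, via \cref{def:fg}. Each new variable enters some factor as a Laplace factor or \ac{lx}, so the outputs are \acp{lx} of $M$ and $M'$. Hence \cref{theorem:generalpartitionfunction} guarantees well-defined, positive partition functions, and \cref{theorem:surjection} guarantees that the original measures are recovered by projection.

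\textbf{Structural identity.} I would build $\beta$ incrementally, arguing by an invariant over the loop: every time the algorithm adds a factor to one output, it adds exactly one factor with the identical scope to the other. I then check each branch.
\begin{itemize}
\item \emph{Independent factor.} Whenever $\phi_i$ is added as an \ac{lf} to $\boldsymbol{\Phi'}^{\text{new}}$, it has scope $\boldsymbol{R}^{\text{orig}}_{(i)}$. For the invariant to hold, $\phi_i$ itself must also be placed (trivially extended) in $\boldsymbol{\Phi}^{\text{orig},\text{x}}$, and I read the algorithm that way. The symmetric case for $\phi'_j$ is analogous.
\item \emph{Structurally identical factors.} Both scopes are equal, so the pair is matched directly.
\item \emph{Subsumption.} The \ac{lx} of the subsumed factor is taken over the larger scope, so the two scopes are equal afterwards.
\item \emph{Overlap.} Both factors are extended over the union $\boldsymbol{R}^{\text{orig}}_{(i)}\cup\boldsymbol{R'}^{\text{orig}}_{(j)}$, so their scopes coincide.
\end{itemize}
Pairing the two factors produced in each step defines $\beta$. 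It is injective and surjective because factors enter the outputs only in these pairs.

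\textbf{Main obstacle.} The hard part is the bookkeeping of the loop, not the scope equalities. The overlapping branch does not remove indices, so a factor may be extended several times against different partners. I would handle this by treating each produced pair as its own factor in $\beta$: the \acp{lx} of the same $\phi_i$ would then be multiple factors in $\boldsymbol{\Phi}^{\text{orig},\text{x}}$. This is allowed since new factors can be \acp{lf}, which keeps the extension Laplace. The cleaner choice may be to count one copy as the \ac{lx} and the rest as \acp{lf} with the same scope.

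I would also have to check that every original factor is emitted at least once. Under the unique-representation assumption, every $\phi_i$ is either independent of all of $M'$ or stands in exactly one of the relations identical, subsumed or overlapping with some $\phi'_j$. So each factor triggers some branch during the double loop, and the invariant then yields the bijection.
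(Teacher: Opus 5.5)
Your proposal is correct and follows essentially the paper's route: the common measurable space comes from the variable set fixed in line~1 of \cref{algo:msx}, and $\beta$ is obtained by pairing the two factors emitted in each branch, whose scopes coincide. You are also more explicit than the paper about the independent branch, where $\phi_i$ itself must also be kept in $\boldsymbol{\Phi}^{\text{orig},\text{x}}$, and about repeated overlap partners.

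One caveat concerns two side claims: that the outputs are Laplace extensions, and that every original factor is emitted. The theorem does not need either, and neither holds as you argue it. Duplicate extensions of the same $\phi_i$ are non-constant, so they cannot be counted as Laplace factors, and they change the joint. Index removal after a subsumption match can skip a factor, e.g.\ when $\phi_1(A,B)$ and $\phi_2(B,C)$ are both subsumed in $\phi'_1(A,B,C)$. The argument should therefore rest only on line~1 and your paired-emission invariant.
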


\begin{proof}
For every individual factor $\phi_i\in\boldsymbol{\Phi}^{\text{orig}}$ or $\phi'_j\in\boldsymbol{\Phi'}^{\text{orig}}$, there is an Laplace extension step within \cref{algo:msx}, which means that the outcomes $M^{\text{x}}$ and $M'^{\text{x}}$ are indeed extended \acp{fg}.
By construction, \cref{algo:msx} enforces $\boldsymbol{R}^{\text{x}}=\boldsymbol{R}^{\text{orig}}\cup\boldsymbol{R'}^{\text{orig}}=\boldsymbol{R'}^{\text{x}}$.




Every individual extension step applied to $M$ is a \ac{lx} by construction, which makes \cref{theorem:generalpartitionfunction} applicable.
Therefore, any single assignment $\boldsymbol{r}^{\text{x}}\in \boldsymbol{R}^{\text{x}}$ is uniquely defined via
\begin{align*}
    P_{M^{\text{x}}}(\boldsymbol{r}^{\text{x}}) & = \frac{1}{Z^{\text{x}}} \prod_{\phi_i^{\text{x}}\in \boldsymbol{\Phi}^{\text{x}}} \phi_i^{\text{x}}(\boldsymbol{r}_i^{\text{x}}) 
\end{align*}
with corresponding constants. 
Therefore, $P_{M^{\text{x}}}$ defines a probability measure on the \ac{ms} $(\mathcal{X}_{\boldsymbol{R}^{\text{orig}}\;\cup\;\boldsymbol{R'}^{\text{orig}}}, \mathcal{P}(\mathcal{X}_{\boldsymbol{R}^{\text{orig}}\;\cup\;\boldsymbol{R'}^{\text{orig}}}))$.

It remains to show structural identity by constructing a bijection $\beta \colon \boldsymbol{\Phi}^{\text{x}}\rightarrow\boldsymbol{\Phi'}^{\text{x}}$ as required in \cref{def:struciden}.
Recall that we assume that neither of the original \acp{fg} $M$ nor $M'$ contain structurally identical factors within themselves.
For each original factor $\phi_i\in\boldsymbol\Phi^{\text{orig}}$,
the mapping is defined as follows:\vspace{-0.25cm}
\begin{itemize}
    \item If $\phi_i$ and some $\phi'_j\in\boldsymbol\Phi'^{\text{orig}}$
    are structurally identical or one subsumes the other, then
    $\beta(\phi_i^{\text{x}})=\phi_j'^{\text{x}}$.\vspace{-0.1cm}
    \item If $\phi_i$ is independent of all factors in $M'$,
    then an \ac{lf} $\phi_j'^{\text{x}}\in\boldsymbol\Phi'^{\text{new}}$
    with identical scope is introduced and
    $\beta(\phi_i^{\text{x}})=\phi_j'^{\text{x}}$.\vspace{-0.1cm}
    The symmetric case is handled analogously.
    \item For every pair of overlapping factors $\phi_i$ and $\phi'_j \in \boldsymbol\Phi'^{\text{orig}}$, \cref{algo:msx} introduces \acp{lx} of both factors
    to the united scope, and
    $\beta(\phi_i^{\text{x}})=\phi_j'^{\text{x}}$.\vspace{-0.1cm}
\end{itemize}

By construction, all matched factor pairs share the same scope
$\boldsymbol R^{\text{x}}_{(i)}=\boldsymbol R'^{\text{x}}_{(j)}$.
Hence, $\beta$ is well-defined and bijective, and
$M^{\text{x}}$ and $M'^{\text{x}}$ are struct. identical.
\end{proof}

\paragraph{Remark:}
In the final \textbf{if}-statement of \cref{algo:msx}, the indexes are intentionally retained, as multiple factors from one \ac{fg} may overlap with the same factor of the other graph.
This induces the same scopes for these factors in the extended \ac{fg}. 
In principle, such factors could be merged together, as assumed for the original \ac{fg} at the beginning of this section. 
However, merging might yield a loss of information regarding which parts are behaving as Laplace influence, because the merged factor is not necessarily Laplace in the same \ac{rv}, which violates the construction of being an \ac{lx}.

Keeping the previous remark in mind, we can even reverse the process of an \ac{msx}.
\paragraph{Returning to the Original Factor Graph:} 
In principle, \acp{lx} can be reversed by inspecting an \ac{fg} for trivial factors or for \acp{rv} with constant influence within a factor.
The associated constants can be absorbed into the partition function.
However, if the original \ac{fg} already contained a factor with a \ac{rv} of constant influence (Laplace), reversing the extension could yield an even smaller \ac{fg} as the original \ac{fg} is not retained during the extension.
While this direction is not explored in the present work, it is always possible to identify a ``minimal'' \ac{fg}, which contains no \acp{lf} and no \acp{rv} with constant influence within any factor.
The minimal \ac{fg} can be regarded as a representative of an equivalence class of \acp{fg}, comprising all \acp{lx} of this minimal structure.
In contrast to \textit{variable elimination} \citep{ZhaPo94}, which eliminates variables from the whole \ac{fg}, this procedure allows the elimination of individual \acp{rv} from a factor even if they remain present in other factors.
Thus, in case the original \ac{fg} is minimal, i.e., it has no Laplace \acp{rv}, the extensions can be reversed again and the original \ac{fg} can be obtained.

\subsection{Comparison of Two Factor Graphs}\label{subsection:four}
Given two \acp{fg} $M$ and $M'$, 
defined on different finite \acp{ms} and involving arbitrarily different sets of \acp{rv}, a direct comparison is in general not meaningful.
By applying \cref{algo:msx} and 
\cref{theorem:structurallyidentical}, both \acp{fg} are lifted to \acp{lx} $M^{\text{x}}$ and $M'^{\text{x}}$ that share the same graphical structure and \ac{ms}.

By \cref{theorem:surjection}, each extended \ac{fg} obtains a measure-preserving projection back to its original distribution, preserving probabilistic behaviour on the original space $\mathcal{X}_{\boldsymbol{R}^{\text{orig}}}$ and $\mathcal{X}_{\boldsymbol{R'}^{\text{orig}}}$, respectively.

At the level of the full joint distribution, standard distances, metrics, and divergence measures, e.g. \textit{Kullback-Leibler Divergence}~\citep{kullback1951information}, \textit{total variation distance}~\citep{Bretagnolle1978estimation}, \textit{Wasserstein metric}~\citep{vaserstein1969markov}, \textit{Hellinger distance}~\citep{hellinger1909neue} and more, can be applied to the extended \acp{fg}, see \cref{fig:metrics}.
However, preserving the original factorisation while extending the graph enables comparisons at the local / conditional factor level rather than only globally.

One local measure has been proposed by \cite{Chan2005a}, who focus on relative differences between the extrema of factor potentials.
Our Laplace-based introduced construction might be more suitable for deviation measures for an expected uniform influence, emphasising the importance of a weighting procedure for factors (\cref{corollary:lfg}).

\begin{figure}[t]
\centering
\begin{tikzpicture}[
    factor/.style={rectangle, draw, fill=black, minimum size=3pt, inner sep=0pt},
    var/.style={circle, draw, fill=white, minimum size=3pt, inner sep=0pt},
    every node/.style={font=\small}
]

\node (M) at (-4,2) {$M$};

\node (Mp) at (4,2) {$M'$};

\draw[<->] (-3,2) -- (3,2)
    node[midway, below] {$d(M,M') =\ ?$};

\draw[->] (-3.4,1.7) -- (-1.4,0.2)
    node[midway, below left] {\textbf{MSLX}};
\draw[->] (3.4,1.7) -- (1.4,0.2)
    node[midway, below right] {\textbf{MSLX}};

\node (Mx) at (-1,0) {$M^{\mathrm{x}}$};

\node (Mpx) at (1,0) {$M'^{\mathrm{x}}$};

\draw[<->] (-0.4,-0.1) -- (0.4,-0.1)
    node[midway, below] {$d({M^{\text{x}}},{M'^{\text{x}}})$};
\end{tikzpicture}\vspace{-0.25cm}
\caption{Schematic illustration of factor graphs $M$ and $M'$ and their Laplace extensions $M^{\text{x}}$ and $M'^{\text{x}}$ via \cref{algo:msx}.
A distance measure $d$ can only be applied between the extensions, which are defined on the same \ac{ms}.}\label{fig:metrics}
\end{figure}

The identical structure also enables a structural analysis.
In particular, structural differences are encoded exclusively by Laplace components, while all remaining discrepancies are captured by the non-Laplace potentials.
Even if two \acp{fg} are already defined on the same \ac{ms}, applying the extension remains meaningful, as it provides the additional option of local analysis.

In the cases where only one \ac{fg} is given, a comparison to the structurally identical Laplace \ac{fg} is possible (\cref{corollary:lfg}), where deviations can be quantified as the relative to a fully Laplace \ac{fg} baseline (see~\cref{append:BExample}). 

\section{Discussion}
\paragraph{Comparison to Related Work:}
Comparing structured probabilistic models across different dimensions, \acp{ms}, or structural assumptions is a well-known challenge.
\citet{cai2022distances} aim to define meaningful distances under such heterogeneity.
\ac{msx} differs in that it explicitly constructs a minimal structural extension of an \ac{fg}, providing a representation that preserves probabilistic inference results.
\acp{fg} are particularly amenable to uniform (Laplace) extensions, which preserve the original distribution while aligning scopes across models.
Further, in principle, every discrete distribution could be interpreted as a deviation from a uniform reference, with dependencies encoded via Laplace factors.\vspace{-0.25cm}

\paragraph{Opportunities and Advantages:}
A key benefit of \ac{msx} is that one does not have to  explicitly compute full joint distributions or partition functions.
By normalising factors as weighted influences on an underlying locally independent uniform reference, \ac{msx} achieves structural comparability with minimal computational overhead.
\Ac{msx} is fully canonical up to isomorphism, enabling repeatable and interpretable comparisons.
Furthermore, by aligning factor scopes, local relative similarities become computable, supporting fine-grained, stepwise analyses of structural changes.\vspace{-0.25cm}

\paragraph{Challenges and Limitations:}
Some assumptions require further consideration.
First, uniqueness of factor scopes is essential for unambiguous interpretation. 
While the original \acp{fg} typically satisfy uniqueness of factor scopes, extensions might introduce redundant scopes that may need merging, which, however, leads to a loss of structural comparisons to the original \ac{fg}.
Second, while \acp{lx} are measure-preserving projections, semantic differences between original factors are disregarded, which may limit interpretability in some applications and need further investigation.
Third, although \ac{msx} provides a structural alignment, quantitative measures of distance or impact on the distribution (e.g., partition function) require further formalisation, especially when considering local (non-uniform) changes of potentials.\vspace{-0.25cm}

\paragraph{Directions for Future Work:}
The \ac{msx} framework opens multiple avenues for future exploration:\vspace{-0.25cm}
\begin{itemize}
    \item \textbf{Local similarity and explainability:} Investigate stepwise changes in individual factors or subgraphs, estimating how small modifications spread through the model and affect other parts of the distribution.
    This could lead to transparent feedback mechanisms for structural change or dependency evaluation.  \vspace{-0.1cm}
    \item \textbf{Hierarchical and relational models:} Many structured probabilistic models naturally extend one another.
    Understanding how minimal extensions operate in hierarchies could enable efficient comparisons and partial alignments \citep{Speller2025a}.  \vspace{-0.1cm}
    \item \textbf{Factor-level measures:} Develop metrics explicitly targeting differences between factors, leveraging the \ac{lx} mechanism to transform factors into a reference representation on a common \ac{ms}.  \vspace{-0.1cm}
    \item \textbf{Extension to other factorised models:} Examine if the introduced concept of minimal Laplace extension generalises to other structured (factorised) probabilistic representations beyond \acp{fg}.  \vspace{-0.1cm}
\end{itemize}

\section{Conclusion}
This work introduces \ac{msx} as a principled framework for inducing comparability between arbitrary \acp{fg} via minimal structural extension. 
Based on the idea of uniform influence, we present a deterministic construction procedure to find a surpassing graphical representation, while preserving probabilistic consistency. 
\ac{msx} establishes a sound foundation for comparing, aligning, and aggregating factorised probability distributions that are originally incomparable.
Beyond global comparability of the full graph, \ac{msx} constitutes a versatile tool to generate common reference representations for structured probabilistic models and creates new opportunities to investigate local differences for subgraphs, factors, or local neighbourhoods.
Moreover, it supports the definition of similarity measures at finer levels of granularity beyond the full joint distribution and across hierarchical or stepwise differences, yielding a more transparent feedback mechanism for structural changes and dependencies.

\subsubsection*{Acknowledgements}
This work was partially funded by the Ministry of Culture and Science of the German State of North Rhine-Westphalia.



\bibliographystyle{plainnat}
\bibliography{References} 

\clearpage
\appendix
\thispagestyle{empty}

\onecolumn
\aistatstitle{Supplementary Materials} 
\section{Example MSLX}
\begin{example}
Let $M = (\boldsymbol{V}, \boldsymbol{E})= (\boldsymbol{R} \cup \boldsymbol{\Phi}, \boldsymbol E)= ( \{X_2, \ldots, X_6\} \cup \{\phi_1, \ldots, \phi_4\},\boldsymbol{E})$ and $M'=(\{X_1,X_3, \ldots, X_6\} \cup \{\phi'_1, \ldots, \phi'_3\},\boldsymbol{E}')$ be two \acp{fg} with corresponding edges $\boldsymbol{E}$ and $\boldsymbol{E}'$, respectively (see~\cref{fig:exampleappendix}).
Let $M^{\text{x}}$ and $M'^{\,\text{x}}$ be its extensions and outcomes of \cref{algo:msx}, then they have the same $\boldsymbol{R}^{\text{x}}=\{X_1, \ldots, X_6\}=\boldsymbol{R}'^{\text{x}}$ with identical argument lists $\mathcal{X}_{\boldsymbol{R}^{\text{x}}_{(i)}}=\mathcal{X}_{\boldsymbol{R}'^{\text{x}}_{(i)}}$ for $i=1,\ldots,4$ for their factors $\phi_i$ and $\phi'_i$ and the same \ac{ms} $(\mathcal{X}_{\boldsymbol{R}^{\text{x}}},\mathcal{P}(\mathcal{X}_{\boldsymbol{R}^{\text{x}}}))=(\mathcal{X}_{\boldsymbol{R}'^{\text{x}}},\mathcal{P}(\mathcal{X}_{\boldsymbol{R}'^{\text{x}}}))$.
\begin{figure}[!htbp]
	\begin{center}\hspace*{-2.1cm}
		\begin{tikzpicture}[>=stealth]
			factor/.style={rectangle, draw, fill=black, minimum size=3pt, inner sep=0pt},
			var/.style={circle, draw, fill=white, minimum size=3pt, inner sep=0pt},
			every node/.style={font=\small}
			]
			
			
			
			
                    \node (M) at (-5.55,5) {$M$};
				
					\node (Mp) at (10.42,5) {$M'$};
				\node (OL) at (-2.6,4) {
					\begin{minipage}{0.59\textwidth}
						\centering
						\scalebox{0.75}{\begin{tikzpicture}
	\node[rv, opacity=0, draw = blue] (ceins) {$X_1$};
	\node[rv, right = 1.8cm of ceins] (czwei) {$X_2$};
	\node[rv, right = 1.8cm of czwei] (cdrei) {$X_3$};
	
	\factor{below}{ceins}{0.9cm}{180}{$\phi_1$}{f1}
	
	\factor{below}{czwei}{0.9cm}{270}{$\phi_2$}{f2}
	\factor{below}{cdrei}{0.9cm}{0}{$\phi_3$}{f3}

	\node[rv, right = 0.9cm of f1] (cvier) {$X_4$};
	\node[rv, right = 0.9cm of f2] (cfunf) {$X_5$};
	
	\node[rv, below = 0.9cm of cvier] (csechs) {$X_6$};
	\factor{right}{csechs}{0.9cm}{0}{$\phi_4$}{f4}
	
	\draw (f1) -- (cvier);
	\draw (f1) -- (csechs);
	
	\draw (f2) -- (czwei);
	\draw (f2) -- (cvier);
	\draw (f2) -- (cfunf);
	
	\draw (f3) -- (cfunf);
	\draw (f3) -- (cdrei);
	
	\draw (f4) -- (csechs);
	\draw (f4) -- (cfunf);
    \node[above=20mm] at (f2) {$\left(\Omega_{M} = \times_{i=2}^{6}\mathcal{X}_{X_i}, \sigma_{\text{pow}}(\Omega_{M})\right)$};
\end{tikzpicture}}
					\end{minipage}
				};
				
				\node (OR) at (7.6,4) {
					\begin{minipage}{0.35\textwidth}
						\centering
						\scalebox{0.75}{\begin{tikzpicture}
	\node[rv] (ceins) {$X_1$};
	\node[rv, opacity=0, right = 1.8cm of ceins, draw = blue] (czwei) {$X_2$};
	\node[rv, right = 1.8cm of czwei] (cdrei) {$X_3$};
	
	\factor{below}{ceins}{0.9cm}{180}{$\phi'_1$}{f1}
	
	\factor{below}{czwei}{0.9cm}{270}{$\phi'_2$}{f2}
	\factor{below}{cdrei}{0.9cm}{0}{$\phi'_3$}{f3}

	\node[rv, right = 0.9cm of f1] (cvier) {$X_4$};
	\node[rv, right = 0.9cm of f2] (cfunf) {$X_5$};
	
	\node[rv, below = 0.9cm of cvier] (csechs) {$X_6$};
	\node[rv, opacity=0, right = 0.9cm of csechs] (f4) {$\phi'_4$};
	
	\draw (f1) -- (ceins);
	\draw (f1) -- (cvier);
	\draw (f1) -- (csechs);
	
	\draw (f2) -- (cvier);
	\draw (f2) -- (cfunf);
	
	\draw (f3) -- (cfunf);
	\draw (f3) -- (cdrei);

     \node[above=20mm] at (f2) {$\left(\Omega_{M'} =\mathcal{X}_{X_1} \times \left(\times_{i=3}^{6}\mathcal{X}_{X_i}\right), \sigma_{\text{pow}}(\Omega_{M'})\right)$};
     
\end{tikzpicture}}
					\end{minipage}
				};
				
				\node (UL) at (-2.6,-.50) {
					\begin{minipage}{0.59\textwidth}
						\centering
						\scalebox{0.75}{\begin{tikzpicture}
	\node[rv, draw = blue, text=blue] (ceins) {$X_1$}; 
	\node[rv, right = 1.8cm of ceins] (czwei) {$X_2$};
	\node[rv, right = 1.8cm of czwei] (cdrei) {$X_3$};
	
	\factor{below}{ceins}{0.9cm}{180}{$\phi^{\textcolor{blue}{\,\text{x}}}_1$}{f1}
	\factorcolored{below}{ceins}{0.9cm}{180}{$\phi^{\textcolor{blue}{\,\text{x}}}_1$}{f1}{blue}
	\path (f1) node[draw=blue, fill=gray!30] {};
	
	\factor{below}{czwei}{0.9cm}{270}{$\phi_2$}{f2}
	\factor{below}{cdrei}{0.9cm}{0}{$\phi_3$}{f3}

	\node[rv, right = 0.9cm of f1] (cvier) {$X_4$};
	\node[rv, right = 0.9cm of f2] (cfunf) {$X_5$};
	
	\node[rv, below = 0.9cm of cvier] (csechs) {$X_6$};
	\factor{right}{csechs}{0.9cm}{0}{$\phi_4$}{f4}

	\draw[blue, dashed] (f1) -- (ceins);   
	\draw (f1) -- (cvier);
	\draw (f1) -- (csechs);
	
	\draw (f2) -- (czwei);
	\draw (f2) -- (cvier);
	\draw (f2) -- (cfunf);
	
	\draw (f3) -- (cfunf);
	\draw (f3) -- (cdrei);
	
	\draw (f4) -- (csechs);
	\draw (f4) -- (cfunf);
   \node[below=20mm] at (f2) {$\left(\Omega_{M^{\textcolor{blue}{\text{x}}}} =\textcolor{blue}{\mathcal{X}}_{\textcolor{blue}{X_1}}\times\left(\times_{i=2}^{6}\mathcal{X}_{X_i}\right) , \sigma_{\text{pow}}(\Omega_{M^{\textcolor{blue}{\text{x}}}})\right)$};
\end{tikzpicture} 
}
					\end{minipage}
				};
				
				\node (UR) at (7.6,-.50) {
					\begin{minipage}{0.35\textwidth}
						\centering
						\scalebox{0.75}{\begin{tikzpicture}
	\node[rv] (ceins) {$X_1$};
	\node[rv, right = 1.8cm of ceins, draw = blue, text=blue] (czwei) {$X_2$}; 
	\node[rv, right = 1.8cm of czwei] (cdrei) {$X_3$};
	
	\factor{below}{ceins}{0.9cm}{180}{$\phi'_1$}{f1};
	
	\factorcolored{below}{czwei}{0.9cm}{270}{$\phi_2'^{\textcolor{blue}{\,\text{x}}}$}{f2}{blue} 
	\path (f2) node[draw=blue, fill=gray!30] {};
	
	\factor{below}{cdrei}{0.9cm}{0}{$\phi_3'$}{f3}

	\node[rv, right = 0.9cm of f1] (cvier) {$X_4$};
	\node[rv, right = 0.9cm of f2] (cfunf) {$X_5$};
	
	\node[rv, below = 0.9cm of cvier] (csechs) {$X_6$};
	\factorcolored{right}{csechs}{0.9cm}{0}{$\color{blue}{\phi_4'^{\,\text{x}}}$}{f4}{blue}
	
	\draw (f1) -- (ceins);
	\draw (f1) -- (cvier);
	\draw (f1) -- (csechs);
	
	\draw[blue, dashed] (f2) -- (czwei);   
	\draw (f2) -- (cvier);
	\draw (f2) -- (cfunf);
	
	\draw (f3) -- (cfunf);
	\draw (f3) -- (cdrei);
	
	\draw[blue, dashed] (f4) -- (csechs);
	\draw[blue, dashed] (f4) -- (cfunf);
      \node[below=20mm] at (f2) {$\left(\Omega_{M'^{\,\textcolor{blue}{\text{x}}}} =  \mathcal{X}_{X_1}\times \textcolor{blue}{\mathcal{X}}_{\textcolor{blue}{X_2}}\times\left(\times_{i=3}^{6}\mathcal{X}_{X_i}\right), \sigma_{\text{pow}}(\Omega_{M'^{\,\textcolor{blue}{\text{x}}}})\right)$};
      
\end{tikzpicture}}
					\end{minipage}
				};
				
				
                \draw[<->, thick,color=black]
  ([xshift=-14mm]UL.east) --
  node[midway, above] {$d(M^{\textcolor{blue}{\text{x}}},M'^{\,\textcolor{blue}{\text{x}}})$}
  node[midway, yshift=-4mm] {\textcolor{blue}{same} \ac{ms}}
  ([xshift=-6mm]UR.west);
    \draw[<->, thick,color=black]
  ([xshift=-14mm]OL.east) --
  node[below] {$d(M,{M'})=?$}
  node[midway,yshift=4mm] {different \acp{ms}}
  ([xshift=-6mm]OR.west);
                \draw[->, thick,color=black] (OL) -- node[left]{MSLX} (UL);
				\draw[->, thick,color=black] (OR) -- node[right]{MSLX} (UR);
				\node (MX) at (-5.55,-0.5) {$M^{\textcolor{blue}{\text{x}}}$};
				\node (MpX) at (10.42,-0.5) {$M'^{\,\textcolor{blue}{\text{x}}}$};
                        
		\end{tikzpicture}
	\end{center}
      \caption{Two \acp{fg} $M$ and $M'$ defined on different \acp{ms} and therefore different factors get extended via \cref{algo:msx} to achieve structural equality on the graph level while the probability distributions are defined on the same \ac{ms}.}
    \label{fig:exampleappendix}
\end{figure}
\end{example}

\section{Uniqueness and Order-invariance as a Consequence of Minimality}
For any pair of input factor graphs, MSLX produces a unique and stable output structure (up to normalisation), as the construction is fully determined by minimal scope alignment.
Hence, the resulting extended factor graph is independent of algorithmic ordering of factors or representation choices.
This is due to the order of the algorithmic case distinctions (independent, overlapping etc.) and the allowance of multiple factors with the same scope after extension.
Post-processing steps such as merging factors with identical scopes are separate from the construction.
In particular, MSLX preserves the provenance of factors, i.e., it retains the mapping from each extended factor to its originating factor (\cref{theorem:surjection}).
This information may be relevant for downstream analysis and is intentionally not collapsed during construction.\\
While the resulting extension is unique, the origin of a given extended factor (in terms of equivalent representations or alternative but redundant extensions) may not be unique.
This is expected and does not affect comparability, as all such representations induce the same probability measure on the common \ac{ms}. 
Actually, the use of the KL-divergence is rather stable for probability measures in such situations.
Crucially, the value of the KL divergence is independent of the specific Laplace extension, since Laplace extensions only introduce uniform refinements that do not alter the underlying distribution.
Formally, consider extending both $P_1,P_2$ from $\Omega=\mathcal{X}_{\boldsymbol{R}_{\text{ex}}}$ to $P^{\text{x}}_1,P^{\text{x}}_2$ on the state space $\Omega^{\text{x}} = \mathcal{X}_{\boldsymbol{R}_{\text{ex}}}\times \mathcal{X}_{R_{\text{new}}}$ via an additional "irrelevant" Laplace variable $R_{\text{new}}\notin \boldsymbol{R}_{\text{ex}}$.
Then
\begin{align*}
D_{\text{KL}}(P^{\text{x}}_1\Vert P^{\text{x}}_{2})=&\sum_{(\boldsymbol{r}^{\text{ex}},r^{\text{new}})\in\Omega^{\text{x}}} P^{\text{x}}_1(\boldsymbol{r}^{\text{x}})\log(P^{\text{x}}_1(\boldsymbol{r}^{\text{x}})/P^{\text{x}}_2(\boldsymbol{r}^{\text{x}}))\\
=&\sum_{(\boldsymbol{r}^{\text{ex}},r^{\text{new}})\in\Omega^{\text{x}}}\frac{P_1(\boldsymbol{r}^{\text{ex}})}{\vert\mathcal{X}_{R_{\text{new}}} \vert}\log\left(\frac{P_1(\boldsymbol{r}^{\text{ex}}) (1/\vert\mathcal{X}_{R_{\text{new}}} \vert)}{P_2(\boldsymbol{r}^{\text{ex}})(1/\vert\mathcal{X}_{R_{\text{new}}} \vert)}\right)\\
   = &\sum_{r^{\text{new}}\in \mathcal{X}_{R_{\text{new}}}}\sum_{\boldsymbol{r}^{\text{ex}}\in \mathcal{X}_{\boldsymbol{R}_{\text{ex}}}} \frac{P_1(\boldsymbol{r}^{\text{ex}})}{\vert\mathcal{X}_{R_{\text{new}}} \vert}\log(P_1(\boldsymbol{r}^{\text{ex}})/P_2(\boldsymbol{r}_i))\\
   =& \sum_{r^{\text{new}}\in \mathcal{X}_{R_{\text{new}}}}\frac{1}{\vert\mathcal{X}_{R_{\text{new}}}\vert}
   \sum_{\boldsymbol{r}^{\text{ex}}\in \mathcal{X}_{\boldsymbol{R}_{\text{ex}}}}P_1(\boldsymbol{r}^{\text{ex}})\log(P_1(\boldsymbol{r}^{\text{ex}})/P_2(\boldsymbol{r}^{\text{ex}}))\\
    =&\sum_{\boldsymbol{r}^{\text{ex}}\in \mathcal{X}_{\boldsymbol{R}_{\text{ex}}}} P_1(\boldsymbol{r}^{\text{ex}})\log(P_1(\boldsymbol{r}^{\text{ex}})/P_2(\boldsymbol{r}^{\text{ex}}))=D_{\text{KL}}(P_1\Vert P_{2}), 
\end{align*}
since the Laplace extension cancels multiplicatively.
Hence, KL divergence is invariant under arbitrary (even redundant) Laplace extensions, establishing scale invariance with respect to the \ac{ms}. 
This highlights a key strength of our framework: while arbitrary extensions may lead to different representations, \cref{algo:msx} enforces structural alignment and selects the minimal canonical form.
In contrast, non-structured extensions may share the same \ac{ms} but lack structural comparability, which is precisely what \cref{algo:msx} guarantees.\\

\section{Connection to Shannon Entropy}\label{append:BExample}
We can calculate the KL-divergence for the direct comparison of one distribution $P_M$ to a purely Laplace factor graph (implying an underlying fully uniform distribution $P_U$) by a simplified calculation using the Shannon-Entropy $H(P):=-\sum_{\boldsymbol{r}\in\mathcal{X}_{\boldsymbol{R}}} P(\boldsymbol{R}=\boldsymbol{r})\log(P(\boldsymbol{R}=\boldsymbol{r}))$. With $n$ being the number of different elements $\boldsymbol{r}$ in the state space, we get:
      \begin{align*}
        D_{\text{KL}}(P_M\Vert P_{U}) &= \sum_{\boldsymbol{r}\in\mathcal{X}_{\boldsymbol{R}}} P_M(\boldsymbol{r})\log\left(\frac{P_M(\boldsymbol{r})}{P_U(\boldsymbol{r})}\right)= \sum_{\boldsymbol{r}\in\mathcal{X}_{\boldsymbol{R}}} P_M(\boldsymbol{r})\log\left(\frac{P_M(\boldsymbol{r})}{1/n}\right)\\
        &=\sum_{\boldsymbol{r}\in\mathcal{X}_{\boldsymbol{R}}} P_M(\boldsymbol{r})\log(P_M(\boldsymbol{r}))+\sum_{\boldsymbol{r}\in\mathcal{X}_{\boldsymbol{R}}}P_M(\boldsymbol{r})\log(n)\\
        &=- H(P_M) +\log(n) \cdot 1\\
        &=-\sum_{\boldsymbol{r}\in\mathcal{X}_{\boldsymbol{R}}} \frac{1}{n}\log(1/n) - H(P_M)= H(P_U)-H(P_M)=\log(n)-H(P_M)
    \end{align*}
Due to $H(P_U)=\log(n)$, the calculation of $D_{\text{KL}}$ reduces directly to $H(P_M)$.
\section{Detailed Proofs}
\setcounter{theorem}{0}
\begin{theorem}
Let $M^{\text{x}}$ be an \ac{lx} of $M$.
Then, the partition function of $P_{M^{\text{x}}}$ is given by
\begin{align*}
        Z^{\text{x}}= c \cdot \prod_{X\in \boldsymbol{R}^{\text{new}}}^{} \vert \text{range}(X)\vert \cdot Z= c\cdot \vert \mathcal{X}_{\boldsymbol{R}^{\text{new}}}\vert\cdot Z
\end{align*}
    with $c:=\prod_{\phi^{\text{x}}_i\in\boldsymbol{\Phi}^{\text{x}}} c_i\in\mathbb{R}_{> 0}$ and $c_i$ given by $c_i\cdot \phi_i(\boldsymbol{r}^{\text{orig}}_i)=  \phi_i^{\text{x}}(\boldsymbol{r}_i^{\text{x}})$ for $\phi_i^{\text{x}}\in\boldsymbol{\Phi}^{\text{orig},\text{x}}$ and $\phi_{i}(\boldsymbol{r}^{\text{x}}_{i})=c_i$ for $\phi_i^{\text{x}}\in\boldsymbol{\Phi}^{\text{new}}$.
\end{theorem}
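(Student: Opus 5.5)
The plan is to compute $Z^{\text{x}}$ directly from its definition as the sum of the joint potential over $\mathcal{X}_{\boldsymbol{R}^{\text{x}}}$. The Laplace property will reduce every extended factor to a scaled copy of its original factor, or to a constant. The computation has four steps.

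\emph{Step 1: split the state space.} By \cref{def:factorgraphextension}, $\boldsymbol{R}^{\text{x}}=\boldsymbol{R}^{\text{orig}}\,\dot\cup\,\boldsymbol{R}^{\text{new}}$. Hence $\mathcal{X}_{\boldsymbol{R}^{\text{x}}}=\mathcal{X}_{\boldsymbol{R}^{\text{orig}}}\times\mathcal{X}_{\boldsymbol{R}^{\text{new}}}$, every $\boldsymbol{r}^{\text{x}}$ is uniquely a pair $(\boldsymbol{r}^{\text{orig}},\boldsymbol{r}^{\text{new}})$, and so
\begin{align*}
Z^{\text{x}}=\sum_{\boldsymbol{r}^{\text{orig}}\in\mathcal{X}_{\boldsymbol{R}^{\text{orig}}}}\;\sum_{\boldsymbol{r}^{\text{new}}\in\mathcal{X}_{\boldsymbol{R}^{\text{new}}}}\;\prod_{\phi_i^{\text{x}}\in\boldsymbol{\Phi}^{\text{x}}}\phi_i^{\text{x}}(\boldsymbol{r}^{\text{x}}_i).
\end{align*}

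\emph{Step 2: apply the Laplace property factor by factor.} Split the product over $\boldsymbol{\Phi}^{\text{x}}=\boldsymbol{\Phi}^{\text{orig},\text{x}}\,\dot\cup\,\boldsymbol{\Phi}^{\text{new}}$ and use the bijection $\eta$.
\begin{itemize}
\item For $\phi_i^{\text{x}}=\eta(\phi_i)$, the \ac{lx} definition gives $\phi_i^{\text{x}}(\boldsymbol{r}^{\text{x}}_i)=c_i\,\phi_i(\boldsymbol{r}^{\text{orig}}_i)$. Here $\boldsymbol{r}^{\text{orig}}_i$ is the projection of $\boldsymbol{r}^{\text{x}}_i$ onto $\boldsymbol{R}^{\text{orig}}_{(i)}$, which is the same as the projection of the global $\boldsymbol{r}^{\text{orig}}$ onto that scope.
\item For $\phi_j^{\text{x}}\in\boldsymbol{\Phi}^{\text{new}}$, the factor is an \ac{lf}, so $\phi_j^{\text{x}}\equiv c_j$.
\end{itemize}
The summand therefore equals $\bigl(\prod_{\phi^{\text{x}}_i\in\boldsymbol{\Phi}^{\text{x}}}c_i\bigr)\cdot\prod_{\phi_i\in\boldsymbol{\Phi}^{\text{orig}}}\phi_i(\boldsymbol{r}^{\text{orig}}_i)=c\,\psi(\boldsymbol{r}^{\text{orig}})$, with $\psi$ the original joint potential.

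\emph{Step 3: sum out the new variables.} The expression $c\,\psi(\boldsymbol{r}^{\text{orig}})$ does not depend on $\boldsymbol{r}^{\text{new}}$. The inner sum therefore contributes the factor $|\mathcal{X}_{\boldsymbol{R}^{\text{new}}}|=\prod_{X\in\boldsymbol{R}^{\text{new}}}|\range(X)|$, the latter equality holding because $\mathcal{X}_{\boldsymbol{R}^{\text{new}}}$ is a Cartesian product.

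\emph{Step 4: recognise $Z$.} The remaining outer sum $\sum_{\boldsymbol{r}^{\text{orig}}}\psi(\boldsymbol{r}^{\text{orig}})$ is exactly $Z$ by \cref{def:fg}, which gives the claim. Positivity of $c$ follows because each $c_i>0$.

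The only point needing care is the claim in Step 2 that each extended factor's value is independent of $\boldsymbol{r}^{\text{cross}}_i$ and $\boldsymbol{r}^{\text{new}}_i$. The cross variables belong to $\boldsymbol{R}^{\text{orig}}$, but they do not enter $\phi_i$, so they cause no double counting. For new factors whose scope contains only cross variables, the \ac{lf} condition still gives a constant. I expect no real obstacle; the main risk is bookkeeping of projections. I would also note the edge case $\boldsymbol{R}^{\text{new}}=\emptyset$, where the empty product equals $1$ and the result specialises to the first corollary.
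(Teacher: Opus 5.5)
Your proposal is correct and follows essentially the same computation as the paper: split the product over $\boldsymbol{\Phi}^{\text{orig},\text{x}}$ and $\boldsymbol{\Phi}^{\text{new}}$, replace each factor by $c_i\phi_i$ or $c_j$, pull out $c$, sum out $\boldsymbol{r}^{\text{new}}$ to obtain $\vert\mathcal{X}_{\boldsymbol{R}^{\text{new}}}\vert$, and recognise $Z$. The only difference is cosmetic: the paper starts from the normalisation identity $1=\sum_{\boldsymbol{r}^{\text{x}}}P_{M^{\text{x}}}(\boldsymbol{r}^{\text{x}})$ and solves for $Z^{\text{x}}$ at the end, whereas you expand $Z^{\text{x}}$ directly from its definition, which is slightly more direct.
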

\newpage
\begin{proof}
    Due to normalisation of probability measures and with a unique pairwise assignment of indices from $\phi^{\text{x}}_i\in\boldsymbol{\Phi}^{\text{orig},\text{x}}$ to $\phi_i\in\boldsymbol{\Phi}^{\text{orig}} $, we get 
	\begin{align*}
		\hspace{-0.5cm}1 &=\sum_{\boldsymbol{r}^{\text{x}}\in\mathcal{X}_{\boldsymbol{R}^{\text{x}}}}P_{M^{\text{x}}}(\boldsymbol{r}^{\text{x}})\\
		&\overset{\text{def.}}{=}\sum_{\boldsymbol{r}^{\text{x}}\in\mathcal{X}_{\boldsymbol{R}^{\text{x}}}}\frac{1}{Z^{\text{x}}} \prod_{\phi_i^{\text{x}}\in \boldsymbol{\Phi}^{\text{x}}}^{} \phi_i^{\text{x}}(\boldsymbol{r}_i^{\text{x}})\\
        &= \frac{1}{Z^{\text{x}}}\sum_{\boldsymbol{r}^{\text{x}}\in\mathcal{X}_{\boldsymbol{R}^{\text{x}}}} \prod_{\phi_i^{\text{x}}\in \boldsymbol{\Phi}^{\text{orig},\text{x}}}^{} \phi_i^{\text{x}}(\boldsymbol{r}_i^{\text{x}})\prod_{\phi_j^{\text{x}}\in \boldsymbol{\Phi}^{\text{new}}}^{} \phi_j^{\text{x}}(\boldsymbol{r}_j^{\text{x}})\\
        &\overset{\text{Lap.}}{=}\frac{1}{Z^{\text{x}}}\sum_{\boldsymbol{r}^{\text{x}}\in\mathcal{X}_{\boldsymbol{R}^\text{x}}}\prod_{\phi_i^{\text{x}}\in \boldsymbol{\Phi}^{\text{orig},\text{x}}}^{} c_i\cdot \phi_i(\boldsymbol{r}^{\text{orig}}_i)\prod_{\phi_j^{\text{x}}\in \boldsymbol{\Phi}^{\text{new}}}^{} c_j\\
        &\overset{\text{ind.}}{=}\frac{1}{Z^{\text{x}}}\prod_{\phi_l^{\text{x}}\in \boldsymbol{\Phi}^{\text{orig},\text{x}}}^{} c_l\prod_{\phi_j^{\text{x}}\in \boldsymbol{\Phi}^{\text{new}}}^{}  c_j \sum_{\boldsymbol{r}^{\text{x}}\in\mathcal{X}_{\boldsymbol{R}^{\text{x}}}}\prod_{\phi_i^{\text{x}}\in \boldsymbol{\Phi}^{\text{orig},\text{x}}}^{}\phi_i(\boldsymbol{r}^{\text{orig}}_i)\\
        &=\frac{1}{Z^{\text{x}}}\prod_{\phi_j^{\text{x}}\in \boldsymbol{\Phi}^{\text{x}}}^{} c_j
        \sum_{\boldsymbol{r}^{\text{orig}}\in\mathcal{X}_{\boldsymbol{R}^{\text{orig}}}}\sum_{\boldsymbol{r}^{\text{new}}\in\mathcal{X}_{\boldsymbol{R}^{\text{new}}} }\prod_{\phi_i^{\text{x}}\in \boldsymbol{\Phi}^{\text{orig},\text{x}}}^{}\phi_i(\boldsymbol{r}^{\text{orig}}_i)\\
         &\overset{\text{ind.}}{=}\frac{1}{Z^{\text{x}}}\prod_{\phi_j^{\text{x}}\in \boldsymbol{\Phi}^{\text{x}}}^{} c_j
        \sum_{\boldsymbol{r}^{\text{orig}}\in\mathcal{X}_{\boldsymbol{R}^{\text{orig}}}}\prod_{\phi_i^{\text{x}}\in \boldsymbol{\Phi}^{\text{orig},\text{x}}}^{}\phi_i(\boldsymbol{r}^{\text{orig}}_i)\sum_{\boldsymbol{r}^{\text{new}}\in\mathcal{X}_{\boldsymbol{R}^{\text{new}}}} 1\\
        &=\frac{1}{Z^{\text{x}}}\cdot c
        \sum_{\boldsymbol{r}^{\text{orig}}\in\mathcal{X}_{\boldsymbol{R}^{\text{orig}}}}\prod_{\phi_i^{\text{x}}\in \boldsymbol{\Phi}^{\text{orig},\text{x}}}^{}\phi_i(\boldsymbol{r}^{\text{orig}}_i) \cdot \vert \mathcal{X}_{\boldsymbol{R}^{\text{new}}}\vert \\
           &=\frac{1}{Z^{\text{x}}}\cdot c \cdot \vert \mathcal{X}_{\boldsymbol{R}^{\text{new}}}\vert 
        \sum_{\boldsymbol{r}^{\text{orig}}\in\mathcal{X}_{\boldsymbol{R}^{\text{orig}}}}\prod_{\phi_i\in \boldsymbol{\Phi}^{\text{orig}}}^{}\phi_i(\boldsymbol{r}^{\text{orig}}_i)\\
        &\overset{\text{def.}}{=}\frac{1}{Z^{\text{x}}}\cdot c
         \cdot \vert \mathcal{X}_{\boldsymbol{R}^{\text{new}}}\vert \cdot Z\\
        \end{align*}\vspace{-0.85cm}
         \begin{align*}
 \hspace{-1.3cm}\overset{\text{Equ. Transf.}}{\Leftrightarrow} Z^{\text{x}}&= c  \cdot \vert \mathcal{X}_{\boldsymbol{R}^{\text{new}}}\vert \cdot Z\\
        &= c\cdot \vert \times_{X\in \boldsymbol{R}^{\text{new}}} \text{range}(X) \vert \cdot Z\\
        &=c\cdot \prod_{X\in \boldsymbol{R}^{\text{new}}}^{}\vert \text{range}(X)\vert \cdot Z. \qedhere
        \end{align*}
\end{proof}
    

\begin{corollary}
    If $M^{\text{x}}$ is an \ac{lx} of $M$ and $\boldsymbol{R}^{\text{new}}=\emptyset$, then $Z^{\text{x}} = c \cdot Z$, where $c \in \mathbb{R}_{> 0}$ is a scalar.
\end{corollary}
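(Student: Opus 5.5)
The corollary is a direct specialisation of \cref{theorem:generalpartitionfunction}, so my plan is to invoke that theorem and evaluate the cardinality term for an empty set of new variables. By \cref{theorem:generalpartitionfunction}, for any \ac{lx} $M^{\text{x}}$ of $M$ we have $Z^{\text{x}} = c\cdot\vert\mathcal{X}_{\boldsymbol{R}^{\text{new}}}\vert\cdot Z$ with $c=\prod_{\phi_i^{\text{x}}\in\boldsymbol{\Phi}^{\text{x}}}c_i$. When $\boldsymbol{R}^{\text{new}}=\emptyset$, the factor $\prod_{X\in\boldsymbol{R}^{\text{new}}}\vert\range(X)\vert$ is an empty product and hence equals $1$. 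Equivalently, $\mathcal{X}_{\emptyset}$ is the Cartesian product over the empty index set, which is the singleton consisting of the empty assignment, so $\vert\mathcal{X}_{\boldsymbol{R}^{\text{new}}}\vert=1$. Substituting gives $Z^{\text{x}}=c\cdot Z$.

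To finish, I will check that $c\in\mathbb{R}_{>0}$. Each $c_i$ is positive by the definitions of \ac{lf} and \ac{lx}, and $\boldsymbol{\Phi}^{\text{x}}$ is finite, so $c$ is a finite product of positive reals and is therefore a positive scalar. This scalar depends only on the rescaling constants and not on any assignment.

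The only step that needs any care is the empty-product convention. I will make it explicit: the inner sum over $\boldsymbol{r}^{\text{new}}\in\mathcal{X}_{\boldsymbol{R}^{\text{new}}}$ in the proof of \cref{theorem:generalpartitionfunction} runs over exactly one (empty) assignment and therefore contributes a factor of $1$. Beyond that, the argument is routine.
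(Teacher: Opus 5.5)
Your proposal is correct and takes the same route as the paper: specialise \cref{theorem:generalpartitionfunction} and use that the Cartesian product over the empty set is a singleton, so $\vert\mathcal{X}_{\boldsymbol{R}^{\text{new}}}\vert=1$. Your $c=\prod_{\phi_i^{\text{x}}\in\boldsymbol{\Phi}^{\text{x}}}c_i$ is slightly more careful than the paper's proof, which writes the product only over $\boldsymbol{\Phi}^{\text{orig}}$, although new Laplace factors over original variables can still occur when $\boldsymbol{R}^{\text{new}}=\emptyset$.
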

\begin{proof}
    The result follows by explicitly expanding the partition function of the extended model.
    For factors corresponding to original factors of $M$, each potential is scaled by a constant $c_i$, contributing a multiplicative factor $c=\prod_{\phi_i\in\boldsymbol{\Phi}^{\text{orig}}} c_i$.
    All terms are rescaling terms.
    Given that $\boldsymbol{R}^{\text{new}}=\emptyset$, per definition it holds that $\lvert \mathcal{X}_{\boldsymbol{R}^{\text{new}}}\rvert = 1$ (as the cardinality of the Cartesian product over empty sets is equal to one).
    Thus, applying \cref{theorem:generalpartitionfunction} yields
    \begin{align*}
        Z^x &= c \cdot \vert \mathcal{X}_{\boldsymbol{R}^{\text{new}}}\vert \cdot Z
        = c \cdot Z. \qedhere
    \end{align*}
\end{proof}

\newpage
\setcounter{theorem}{2}
\begin{corollary}
    Let $M$ be a Laplace \ac{fg} and let $\kappa_X:=\lvert\{\phi_i: X\in \boldsymbol{R}^{\text{orig}}_{(i)}\}\rvert$ denote the number of factors in which \ac{rv} $X$ appears.
Then $M$ can be represented by the following scaled factors
\begin{align*}
\phi_i(\boldsymbol{r}_i)=\prod_{X\in \boldsymbol{R}^{\text{orig}}_{(i)}}\vert \text{range}(X)\vert ^{- 1/\kappa_X } 
\end{align*}
for all $\boldsymbol{r}^{\text{orig}}_i\in\mathcal{X}_{\boldsymbol{R}^{\text{orig}}_{(i)}}$ and with $Z=1$. 
\end{corollary}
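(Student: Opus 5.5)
Since $M$ is Laplace, every factor is constant: $\phi_i(\boldsymbol{r}^{\text{orig}}_i)=c_i$ for all assignments. The joint potential is therefore the same constant $\prod_i c_i$ for every $\boldsymbol{r}$. Hence $P_M$ is the uniform distribution on $\mathcal{X}_{\boldsymbol{R}^{\text{orig}}}$, no matter which positive constants $c_i$ are chosen.

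Rescaling a factor by a positive constant changes only $Z$, not $P_M$. This is exactly the mechanism in \cref{theorem:generalpartitionfunction}, with $\boldsymbol{R}^{\text{new}}=\emptyset$ as in the preceding corollary. So we may replace each $c_i$ by
\begin{align*}
\tilde c_i:=\prod_{X\in\boldsymbol{R}^{\text{orig}}_{(i)}}\vert\range(X)\vert^{-1/\kappa_X}
\end{align*}
and obtain an equivalent representation of the same measure. It remains to show that the resulting partition function equals $1$.

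We compute $Z=\sum_{\boldsymbol{r}}\prod_i\tilde c_i=\vert\mathcal{X}_{\boldsymbol{R}^{\text{orig}}}\vert\cdot\prod_i\tilde c_i$. The key step is to swap the double product:
\begin{align*}
\prod_{i}\prod_{X\in\boldsymbol{R}^{\text{orig}}_{(i)}}\vert\range(X)\vert^{-1/\kappa_X}=\prod_{X\in\boldsymbol{R}^{\text{orig}}}\vert\range(X)\vert^{-\kappa_X/\kappa_X}=\prod_{X\in\boldsymbol{R}^{\text{orig}}}\vert\range(X)\vert^{-1}.
\end{align*}
The exponent collapses because $X$ appears in exactly $\kappa_X$ factors. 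The right-hand side is $\vert\mathcal{X}_{\boldsymbol{R}^{\text{orig}}}\vert^{-1}$, so $Z=1$.

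The main obstacle is making sure $\kappa_X\ge 1$ for every $X\in\boldsymbol{R}^{\text{orig}}$, so that the exponent is well defined. This holds because the paper takes $\boldsymbol{R}^{\text{orig}}=\bigcup_i\boldsymbol{R}^{\text{orig}}_{(i)}$, so every variable lies in the scope of at least one factor. A variable in no factor would make $\kappa_X=0$ and break the construction; if needed, I would handle that case by adding a unary Laplace factor for it.
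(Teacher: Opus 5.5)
Your proof is correct and matches the paper's: both fix the constants $\prod_{X\in\boldsymbol{R}^{\text{orig}}_{(i)}}\vert\range(X)\vert^{-1/\kappa_X}$, pull the constant joint potential out of the sum to get $\vert\mathcal{X}_{\boldsymbol{R}^{\text{orig}}}\vert\prod_i\tilde c_i$, and regroup the double product so each $X$ contributes exactly $\vert\range(X)\vert^{-1}$. You also state two points the paper leaves implicit: that rescaling leaves the (uniform) $P_M$ unchanged, and that $\kappa_X\ge 1$ holds because $\boldsymbol{R}^{\text{orig}}=\bigcup_i\boldsymbol{R}^{\text{orig}}_{(i)}$.
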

\begin{proof}
Since $M$ is a Laplace \ac{fg}, all factors are Laplace, which means every factor $\phi_i$ has the same value for all its potentials and there exists a constant $c_i\in \mathbb{R}_{>0}$:
        \begin{align*}
            \phi_{i}(\boldsymbol{r}^{\text{orig}}_{i})=c_i\text{ for all }\boldsymbol{r}^{\text{orig}}_i\in \mathcal{X}_{\boldsymbol{R}^{\text{orig}}_{(i)}}.
        \end{align*}
We choose $c_i:=\prod_{X\in \boldsymbol{R}^{\text{orig}}_{(i)}}\vert X\vert ^{- 1/\kappa_X } $ for the $i$-th factor and show that it normalises $Z$ to $1$.
\begin{align*}
        1&= \sum_{\boldsymbol{r}^{\text{orig}}\in\mathcal{X}_{\boldsymbol{R}^{\text{orig}}}}  P_M(\boldsymbol{r}^{\text{orig}})\\
        &=\frac{1}{Z}\sum_{\boldsymbol{r}^{\text{orig}}\in\mathcal{X}_{\boldsymbol{R}^{\text{orig}}}}\prod_{\phi_i\in \boldsymbol{\Phi}^{\text{orig}}}^{}\phi_i(\boldsymbol{r}^{\text{orig}}_i)\\
        &= \frac{1}{Z}\sum_{\boldsymbol{r}^{\text{orig}}\in\mathcal{X}_{\boldsymbol{R}^{\text{orig}}}}\prod_{\phi_i\in \boldsymbol{\Phi}^{\text{orig}}}^{}\prod_{X_l\in \boldsymbol{R}^{\text{orig}}_{(i)}}\vert \text{range}(X_l)\vert ^{- 1/\kappa_{X_l} }\\
        &\overset{\text{ind.}}{=}  \frac{1}{Z}\prod_{\phi_i\in \boldsymbol{\Phi}^{\text{orig}}}^{}\prod_{X_l\in \boldsymbol{R}^{\text{orig}}_{(i)}}\vert \text{range}(X_l)\vert ^{- 1/\kappa_{X_l} }\sum_{\boldsymbol{r}^{\text{orig}}\in\mathcal{X}_{\boldsymbol{R}^{\text{orig}}}}1\\
        &= \frac{1}{Z}\prod_{\phi_i\in \boldsymbol{\Phi}^{\text{orig}}}^{}\prod_{X_l\in \boldsymbol{R}^{\text{orig}}_{(i)}}\vert \text{range}(X_l)\vert ^{- 1/\kappa_{X_l} } \cdot \vert \mathcal{X}_{\boldsymbol{R}^{\text{orig}}}\vert\\
    &\overset{\text{ind.}}{=}\frac{\vert \mathcal{X}_{\boldsymbol{R}^{\text{orig}}}\vert}{Z}\prod_{\phi_i\in \boldsymbol{\Phi}^{\text{orig}}}^{}\prod_{X_l\in \boldsymbol{R}^{\text{orig}}_{(i)}}\vert \text{range}(X_l)\vert ^{- 1/\kappa_{X_l} } \\
        &= \frac{\vert \mathcal{X}_{\boldsymbol{R}^{\text{orig}}}\vert}{Z}\cdot \underbrace{\vert \text{range}(X_1) \vert^{- 1/\kappa_{X_1} }\cdot\ldots\cdot\vert \text{range}(X_1) \vert ^{- 1/\kappa_{X_1} }}_{\kappa_{X_1}\text{times}} \times\cdots\\
        &\qquad\cdots\times\underbrace{\vert \text{range}(X_{\rvert \boldsymbol{R}^{\text{orig}}\rvert}) \vert^{- 1/\kappa_{X_{\rvert \boldsymbol{R}^{\text{orig}}\rvert}} }\cdot\ldots
        \cdot\vert \text{range}(X_{\rvert \boldsymbol{R}^{\text{orig}}\rvert})\vert^{- 1/\kappa_{X_{\rvert \boldsymbol{R}^{\text{orig}}\rvert}} } }_{\kappa_{X_{\rvert \boldsymbol{R}^{\text{orig}}\rvert}}\text{times}}\\
        &= \frac{\vert \mathcal{X}_{\boldsymbol{R}^{\text{orig}}}\vert}{Z} \cdot \vert \text{range}(X_1) \vert^{- 1}\cdot\ldots\cdot \vert \text{range}(X_{\rvert \boldsymbol{R}^{\text{orig}}\rvert}) \vert^{- 1}\\
        &= \frac{\vert \mathcal{X}_{\boldsymbol{R}^{\text{orig}}}\vert}{Z}\prod_{i=1}^{\rvert \boldsymbol{R}^{\text{orig}}\rvert}\vert \text{range}(X_i) \vert^{- 1}\\
        &=\frac{\vert \mathcal{X}_{\boldsymbol{R}^{\text{orig}}}\vert}{Z}\cdot\vert \mathcal{X}_{\boldsymbol{R}^{\text{orig}}}\vert^{-1}\\
        &=\frac{1}{Z} 
\end{align*}
\end{proof}
\newpage
\begin{theorem}
    Let $M^{\text{x}}$ be an \ac{lx} of an \ac{fg} $M$. Then,
    \begin{align*}
              P_M= P_{M^{\text{x}}}\;\circ\;\pi^{-1}.
    \end{align*}
\end{theorem}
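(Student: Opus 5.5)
The plan is to verify the identity pointwise on singletons and then extend to all events by additivity. Both $P_M$ and $P_{M^{\text{x}}}\circ\pi^{-1}$ are measures on the finite space $(\mathcal{X}_{\boldsymbol{R}^{\text{orig}}},\mathcal{P}(\mathcal{X}_{\boldsymbol{R}^{\text{orig}}}))$. The map $A\mapsto P_{M^{\text{x}}}(\pi^{-1}(A))$ is a measure because preimages commute with disjoint unions. Hence it suffices to show
\begin{align*}
P_{M^{\text{x}}}(\pi^{-1}(\boldsymbol{r}^{\text{orig}}))=P_M(\boldsymbol{r}^{\text{orig}})
\end{align*}
for every $\boldsymbol{r}^{\text{orig}}\in\mathcal{X}_{\boldsymbol{R}^{\text{orig}}}$.

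\textbf{Step 1 (decompose the fibre).} Using the explicit preimage $\pi^{-1}(\boldsymbol{r}^{\text{orig}})=\bigcup_{\boldsymbol{r}^{\text{new}}}\{(\boldsymbol{r}^{\text{orig}},\boldsymbol{r}^{\text{new}})\}$, a disjoint union, finite additivity gives
\begin{align*}
P_{M^{\text{x}}}(\pi^{-1}(\boldsymbol{r}^{\text{orig}}))=\sum_{\boldsymbol{r}^{\text{new}}\in\mathcal{X}_{\boldsymbol{R}^{\text{new}}}}P_{M^{\text{x}}}(\boldsymbol{r}^{\text{orig}},\boldsymbol{r}^{\text{new}}).
\end{align*}

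\textbf{Step 2 (use the Laplace property).} Insert the definition of $P_{M^{\text{x}}}$ as $\frac{1}{Z^{\text{x}}}\prod_{\phi_i^{\text{x}}\in\boldsymbol{\Phi}^{\text{x}}}\phi_i^{\text{x}}$ and split the product over $\boldsymbol{\Phi}^{\text{orig},\text{x}}$ and $\boldsymbol{\Phi}^{\text{new}}$. For factors in $\boldsymbol{\Phi}^{\text{orig},\text{x}}$, the Laplace extension property, transported along the bijection $\eta$, gives $\phi_i^{\text{x}}(\boldsymbol{r}_i^{\text{x}})=c_i\phi_i(\boldsymbol{r}_i^{\text{orig}})$. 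The key point is that this value does not depend on the cross or new coordinates. For factors in $\boldsymbol{\Phi}^{\text{new}}$, which are Laplace factors, $\phi_j^{\text{x}}\equiv c_j$. Each summand is therefore $\frac{c}{Z^{\text{x}}}\prod_{\phi_i\in\boldsymbol{\Phi}^{\text{orig}}}\phi_i(\boldsymbol{r}_i^{\text{orig}})=\frac{c}{Z^{\text{x}}}\psi(\boldsymbol{r}^{\text{orig}})$, which is independent of $\boldsymbol{r}^{\text{new}}$.

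\textbf{Step 3 (apply the partition function result).} The sum over $\boldsymbol{r}^{\text{new}}$ contributes a factor $\lvert\mathcal{X}_{\boldsymbol{R}^{\text{new}}}\rvert$. Substituting $Z^{\text{x}}=c\cdot\lvert\mathcal{X}_{\boldsymbol{R}^{\text{new}}}\rvert\cdot Z$ from \cref{theorem:generalpartitionfunction}, the constants $c$ and the cardinality cancel, leaving $\psi(\boldsymbol{r}^{\text{orig}})/Z=P_M(\boldsymbol{r}^{\text{orig}})$.

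The main obstacle is the bookkeeping in Step 2. The extended scopes may contain cross variables from $\boldsymbol{R}^{\text{orig}}$ that lie outside the original scope, and the argument needs the extended potential to reduce exactly to $c_i\phi_i$ evaluated on the original scope coordinates of the fixed $\boldsymbol{r}^{\text{orig}}$. I would state this explicitly using the partition $\boldsymbol{r}_i^{\text{x}}=(\boldsymbol{r}_i^{\text{orig}},\boldsymbol{r}_i^{\text{cross}},\boldsymbol{r}_i^{\text{new}})$ and the definition of a Laplace extension. The constant $c$ must also be the same product used in \cref{theorem:generalpartitionfunction}, so that the cancellation in Step 3 is exact.
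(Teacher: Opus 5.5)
Your proposal is correct and follows the paper's proof essentially step for step: decompose the fibre $\pi^{-1}(\boldsymbol{r}^{\text{orig}})$ into singletons, replace each extended original factor by $c_i\phi_i(\boldsymbol{r}^{\text{orig}}_i)$ and each new factor by $c_j$, then cancel $c\cdot\lvert\mathcal{X}_{\boldsymbol{R}^{\text{new}}}\rvert$ using \cref{theorem:generalpartitionfunction}. Your reduction from arbitrary events to singletons, via additivity of $A\mapsto P_{M^{\text{x}}}(\pi^{-1}(A))$, is a small but useful addition: the paper only checks the identity pointwise and leaves the passage to the equality of measures implicit.
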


\begin{proof}
It is sufficient to use equality transformations using the given properties and \cref{theorem:generalpartitionfunction}.
\begin{align*}
P_{M^{\text{x}}}(\pi^{-1}\left(\boldsymbol{r}^{\text{orig}}\right))&=P_{M^{\text{x}}}(\cup_{\boldsymbol{r}^{\text{new}}\in\mathcal{X}_{\boldsymbol{R}^\text{new}}}\{(\boldsymbol{r}^{\text{orig}},\boldsymbol{r}^{\text{new}})\}) \\
        &\overset{\text{dis.}}{=} \sum_{\boldsymbol{r}^{\text{new}}\in\mathcal{X}_{\boldsymbol{R}^{\text{new}}}} P_{M^{\text{x}}}\left(\{(\boldsymbol{r}^{\text{orig}},\boldsymbol{r}^{\text{new}})\}\right)\\
	    &\overset{\text{def.}}{=}\frac{1}{Z^{\text{x}}} \sum_{\boldsymbol{r}^{\text{new}}\in\mathcal{X}_{\boldsymbol{R}^{\text{new}}}}\prod_{\phi_{i}^{\text{x}}\in \boldsymbol{\Phi}^{\text{x}}}^{}\phi_{i}^{\text{x}}\left((\boldsymbol{r}^{\text{orig}}_i,\boldsymbol{r}^{\text{new}}_i)\right)\\
    	&=\frac{1}{Z^{\text{x}}}\sum_{\boldsymbol{r}^{\text{new}}\in\mathcal{X}_{\boldsymbol{R}^{\text{new}}}} \prod_{\phi_{i}^{\text{orig},\text{x}}\in \boldsymbol{\Phi}^{\text{orig},\text{x}}}^{}\phi_{i}^{\text{x}}\left((\boldsymbol{r}^{\text{orig}}_i,\boldsymbol{r}^{\text{new}}_i)\right)
        \prod_{\phi_{j}^{\text{x}}\in \boldsymbol{\Phi}^{\text{new}}}^{}\phi_{j}^{\text{x}}\left((\boldsymbol{r}^{\text{orig}}_j,\boldsymbol{r}^{\text{new}}_j)\right)\\
	&\overset{\text{Lapl.}}{=} \frac{1}{Z^{\text{x}}}\sum_{\boldsymbol{r}^{\text{new}}\in\mathcal{X}_{\boldsymbol{R}^{\text{new}}}}\prod_{\phi_{i}^{\text{orig},\text{x}}\in \boldsymbol{\Phi}^{\text{orig},\text{x}}}^{}\phi_{i}(\boldsymbol{r}^{\text{orig}}_i)\cdot c_i\cdot \prod_{\phi_{j}^{\text{x}}\in \boldsymbol{\Phi}^{\text{new}}}^{} c_j\\
    &\overset{\cref{theorem:generalpartitionfunction}}{=} \frac{\sum_{\boldsymbol{r}^{\text{new}}\in\mathcal{X}_{\boldsymbol{R}^{\text{new}}}}1}{Z\cdot c   
    \cdot \vert \mathcal{X}_{\boldsymbol{R}^{\text{new}}}\vert}\;\cdot \prod_{\phi_{i}^{\text{orig},\text{x}}\in \boldsymbol{\Phi}^{\text{orig},\text{x}}}^{}\phi_{i}(\boldsymbol{r}^{\text{orig}}_i)\cdot \prod_{\phi_{j}^{\text{x}}\in \boldsymbol{\Phi}^{\text{x}}}^{} c_j\\
    &\overset{\text{Def. of } c}{=}\frac{ \vert \mathcal{X}_{\boldsymbol{R}^{\text{new}}}\vert}{Z\cdot \vert \mathcal{X}_{\boldsymbol{R}^{\text{new}}}\vert}\;\cdot \prod_{\phi_{i}^{\text{orig},\text{x}}\in \boldsymbol{\Phi}^{\text{orig},\text{x}}}\phi_{i}(\boldsymbol{r}^{\text{orig}}_i)\\
    &\overset{\text{same}}{\underset{\text{index}}{=}}\frac{1}{Z}\;\cdot \prod_{\phi_{i}\in \boldsymbol{\Phi}^{\text{orig}}}^{}\phi_{i}(\boldsymbol{r}^{\text{orig}}_i)\\
    &\overset{\text{def.}}{=}P_M\left(\boldsymbol{r}^{\text{orig}}\right)\tag*{\qedhere}
\end{align*}
\end{proof}
\begin{lemma}
    Let $M^{\text{x}}$ be a factor graph extension of $M$.
    If every factor $\phi^{\text{x}}_i\in \boldsymbol{\Phi}^{\text{new}}$ is an independent extension of $M$ and $\boldsymbol{\Phi}^{\text{orig},\text{x}}=\boldsymbol{\Phi}^{\text{orig}}$, then the partition function of the extended \ac{fg} $M^{\text{x}}$ is given by
	\begin{align*}
		Z^{\text{x}} =Z  \sum_{\boldsymbol{r}^{\text{new}}\in\mathcal{X}_{\boldsymbol{R}^{\text{new}}}}\prod_{\phi_i^{\text{x}}\in\boldsymbol{\Phi}^{\text{new}}} \phi_i^{\text{x}}(\boldsymbol{r}_i^{\text{new}})
	\end{align*}
\end{lemma}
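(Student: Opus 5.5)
We would prove the lemma by writing out the normalisation constant of $M^{\text{x}}$ as a sum over $\mathcal{X}_{\boldsymbol{R}^{\text{x}}}$ and factorising it into an original part and a new part.

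\textbf{Step 1: split the joint potential.} By \cref{def:fg},
\[
Z^{\text{x}}=\sum_{\boldsymbol{r}^{\text{x}}\in\mathcal{X}_{\boldsymbol{R}^{\text{x}}}}\prod_{\phi_i^{\text{x}}\in\boldsymbol{\Phi}^{\text{x}}}\phi_i^{\text{x}}(\boldsymbol{r}^{\text{x}}_i).
\]
Since $\boldsymbol{\Phi}^{\text{x}}=\boldsymbol{\Phi}^{\text{orig}}\,\dot\cup\,\boldsymbol{\Phi}^{\text{new}}$ and $\boldsymbol{\Phi}^{\text{orig},\text{x}}=\boldsymbol{\Phi}^{\text{orig}}$, the product splits into a product over original factors times a product over new factors. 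We use $\mathcal{X}_{\boldsymbol{R}^{\text{x}}}=\mathcal{X}_{\boldsymbol{R}^{\text{orig}}}\times\mathcal{X}_{\boldsymbol{R}^{\text{new}}}$ to write the sum as a double sum over $\boldsymbol{r}^{\text{orig}}$ and $\boldsymbol{r}^{\text{new}}$.

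\textbf{Step 2: show the new factors depend only on new variables.} Each original factor keeps its scope, so it depends only on $\boldsymbol{r}^{\text{orig}}$. For each new factor $\phi_j^{\text{x}}\in\boldsymbol{\Phi}^{\text{new}}$, independence means its scope is disjoint from every $\boldsymbol{R}^{\text{orig}}_{(i)}$. Because every original variable lies in the scope of some original factor (the union identity stated after the scope partition), this forces $\boldsymbol{R}^{\text{x}}_{(j)}\cap\boldsymbol{R}^{\text{orig}}=\emptyset$. Hence $\boldsymbol{R}^{\text{cross}}_{(j)}=\emptyset$, and $\phi_j^{\text{x}}$ depends only on $\boldsymbol{r}^{\text{new}}_j$. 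This is the one place needing care: we must spell out that the definition of independence, combined with $\boldsymbol{R}^{\text{orig}}$ being covered by original scopes, excludes cross variables.

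\textbf{Step 3: factorise the double sum.} By Step 2 the summand is a product $f(\boldsymbol{r}^{\text{orig}})\,g(\boldsymbol{r}^{\text{new}})$, so the double sum over a Cartesian product factorises by distributivity:
\[
Z^{\text{x}}=\Bigl(\sum_{\boldsymbol{r}^{\text{orig}}}\prod_{\phi_i\in\boldsymbol{\Phi}^{\text{orig}}}\phi_i(\boldsymbol{r}^{\text{orig}}_i)\Bigr)\Bigl(\sum_{\boldsymbol{r}^{\text{new}}}\prod_{\phi_j^{\text{x}}\in\boldsymbol{\Phi}^{\text{new}}}\phi_j^{\text{x}}(\boldsymbol{r}^{\text{new}}_j)\Bigr).
\]
The first factor is $Z$ by definition, which yields the claim. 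Equivalently, as in the proof of \cref{theorem:generalpartitionfunction}, one can start from $1=\sum_{\boldsymbol{r}^{\text{x}}}P_{M^{\text{x}}}(\boldsymbol{r}^{\text{x}})$ and rearrange the same way.

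Edge cases: if $\boldsymbol{\Phi}^{\text{new}}=\emptyset$, the empty product is $1$. If new variables appear in no factor, they still contribute the factor $|\mathcal{X}_{\boldsymbol{R}^{\text{new}}}|$ through the sum, which is consistent with the formula.
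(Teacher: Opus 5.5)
Your proof is correct and follows the paper's argument: split $\boldsymbol{\Phi}^{\text{x}}$ into original and new factors, use independence to separate the dependence on $\boldsymbol{r}^{\text{orig}}$ and $\boldsymbol{r}^{\text{new}}$, and factorise the double sum to recover $Z$. The paper starts from $1=\sum P_{M^{\text{x}}}$ rather than expanding $Z^{\text{x}}$ directly. It also leaves implicit your Step 2 observation that independence together with $\boldsymbol{R}^{\text{orig}}=\bigcup_i\boldsymbol{R}^{\text{orig}}_{(i)}$ excludes cross variables from the new factors.
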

\begin{proof}
\begin{align*}
\hspace{-0.3cm}1&= \sum_{\boldsymbol{r}^{\text{x}}\in \mathcal{X}_{\boldsymbol{R}^{\text{x}}}}P_{M^{\text{x}}}(\boldsymbol{r}^{\text{x}})\\
      &= \frac{1}{Z^{\text{x}}}\sum_{\boldsymbol{r}^{\text{x}}\in\mathcal{X}_{\boldsymbol{R}^{\text{x}}}}^{} \prod_{\phi_i^{\text{x}}\in\boldsymbol{\Phi}^{\text{x}}}\phi_i^{\text{x}}(\boldsymbol{r}^{\text{x}}_i)\\
      &=\frac{1}{Z^{\text{x}}}\sum_{\boldsymbol{r}^{\text{x}}\in \mathcal{X}_{\boldsymbol{R}^{\text{x}}}}\prod_{\phi_i^{\text{x}}\in \boldsymbol{\Phi}^{\text{orig},\text{x}}}\phi_i^{\text{x}}(\boldsymbol{r}^{\text{x}}_i)\prod_{\phi_i^{\text{x}}\in \boldsymbol{\Phi}^{\text{new}}}\phi_i^{\text{x}}(\boldsymbol{r}^{\text{x}}_i)\\
      &\overset{\text{ind.}}{=}\frac{1}{Z^{\text{x}}}\sum_{\boldsymbol{r}^{\text{orig}}\in \mathcal{X}_{\boldsymbol{R}^{\text{orig}}}}
      \sum_{\boldsymbol{r}^{\text{new}}\in \mathcal{X}_{\boldsymbol{R}^{\text{new}}}}
      \prod_{\phi_i^{\text{x}}\in \boldsymbol{\Phi}^{\text{orig},\text{x}}}\phi_i^{\text{x}}(\boldsymbol{r}^{\text{orig}}_i)\prod_{\phi_i^{\text{x}}\in \boldsymbol{\Phi}^{\text{new}}}\phi_i^{\text{x}}(\boldsymbol{r}^{\text{new}}_i)\\
      &\overset{\text{iden.}}{=}\frac{1}{Z^{\text{x}}}\sum_{\boldsymbol{r}^{\text{orig}}\in \mathcal{X}_{\boldsymbol{R}^{\text{orig}}}}\sum_{\boldsymbol{r}^{\text{new}}\in \mathcal{X}_{\boldsymbol{R}^{\text{new}}}}\prod_{\phi_i\in \boldsymbol{\Phi}^{\text{orig}}}\phi_i(\boldsymbol{r}^{\text{orig}}_i)\prod_{\phi_i^{\text{x}}\in \boldsymbol{\Phi}^{\text{new}}}\phi_i^{\text{x}}(\boldsymbol{r}^{\text{new}}_i)\\
     &\overset{\text{ind.}}{=}\frac{1}{Z^{\text{x}}}\sum_{\boldsymbol{r}^{\text{orig}}\in \mathcal{X}_{\boldsymbol{R}^{\text{orig}}}} \prod_{\phi_i\in \boldsymbol{\Phi}^{\text{orig}}}\phi_i(\boldsymbol{r}^{\text{orig}}_i)
     \sum_{\boldsymbol{r}^{\text{new}}\in \mathcal{X}_{\boldsymbol{R}^{\text{new}}}}\prod_{\phi_i^{\text{x}}\in \boldsymbol{\Phi}^{\text{new}}}\phi_i^{\text{x}}(\boldsymbol{r}^{\text{new}}_i)\\
    &\overset{\text{def.}}{=}\frac{Z}{Z^{\text{x}}}
     \sum_{\boldsymbol{r}^{\text{new}}\in \mathcal{X}_{\boldsymbol{R}^{\text{new}}}}\prod_{\phi_i^{\text{x}}\in \boldsymbol{\Phi}^{\text{new}}}\phi_i^{\text{x}}(\boldsymbol{r}^{\text{new}}_i)    
\end{align*}
\end{proof}

\begin{theorem}
The \acp{fg} $M^{\text{x}}$ and $M'^{\text{x}}$ returned by \textit{Alg. 1} for input two \acp{fg} $M$ and $M'$ are structurally identical and encode probability distributions on the same \ac{ms}:
\begin{align*}
    \left(\mathcal{X}_{\boldsymbol{R}^{\text{orig}}\;\cup\;\boldsymbol{R'}^{\text{orig}}}, \mathcal{P}\left(\mathcal{X}_{\boldsymbol{R}^{\text{orig}}\;\cup\;\boldsymbol{R'}^{\text{orig}}}\right)\right)
\end{align*}
\end{theorem}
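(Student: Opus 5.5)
The proof has three parts: first that the outputs are Laplace extensions, then that they live on the same measurable space, then that they are structurally identical via an explicit bijection $\beta$.

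\textbf{Step 1: the outputs are Laplace extensions.} I would go through \cref{algo:msx} case by case. Each original factor $\phi_i\in\boldsymbol{\Phi}^{\text{orig}}$ is handled in one of three ways.
\begin{itemize}
\item If $\phi_i$ is structurally identical to some $\phi'_j$, it is copied unchanged. This is the trivial \ac{lx} with $c_i=1$.
\item If $\phi_i$ is subsumed in $\phi'_j$, or overlaps with it, it is replaced by an \ac{lx} over a superset of its scope. By \cref{def:factorgraphextension}(ii) this only enlarges scopes.
\item If $\phi_i$ is independent of all factors of $M'$, it is retained in $M^{\text{x}}$. Its scope copy is inserted into $\boldsymbol{\Phi'}^{\text{new}}$ as an \ac{lf}.
\end{itemize}
In every case the map $\eta$ sending $\phi_i$ to its image is a bijection onto $\boldsymbol{\Phi}^{\text{orig},\text{x}}$ with scope inclusion, and every factor in $\boldsymbol{\Phi}^{\text{new}}$ is an \ac{lf}. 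Hence $M^{\text{x}}$ is an \ac{lx} of $M$, and symmetrically $M'^{\text{x}}$ is an \ac{lx} of $M'$.

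\textbf{Step 2: common measurable space.} Line 1 of the algorithm sets $\boldsymbol{R}^{\text{x}}=\boldsymbol{R'}^{\text{x}}=\boldsymbol{R}^{\text{orig}}\cup\boldsymbol{R'}^{\text{orig}}$. So $\boldsymbol{R}^{\text{new}}=\boldsymbol{R'}^{\text{orig}}\setminus\boldsymbol{R}^{\text{orig}}$, and $\mathcal{X}_{\boldsymbol{R}^{\text{x}}}=\mathcal{X}_{\boldsymbol{R}^{\text{orig}}}\times\mathcal{X}_{\boldsymbol{R}^{\text{new}}}$, which is the common product space. By \cref{theorem:generalpartitionfunction}, $Z^{\text{x}}=c\,\lvert\mathcal{X}_{\boldsymbol{R}^{\text{new}}}\rvert\,Z$ is finite and positive. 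Therefore the normalised joint potential $P_{M^{\text{x}}}$ is a well-defined probability measure on $(\mathcal{X}_{\boldsymbol{R}^{\text{x}}},\mathcal{P}(\mathcal{X}_{\boldsymbol{R}^{\text{x}}}))$, and the same holds for $M'^{\text{x}}$. \Cref{theorem:surjection} additionally confirms that both measures project back to $P_M$ and $P_{M'}$ respectively, though this is not needed for the claim itself.

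\textbf{Step 3: structural identity.} I would build $\beta\colon\boldsymbol{\Phi}^{\text{x}}\to\boldsymbol{\Phi'}^{\text{x}}$ by recording, for every step of the loop that adds factors, the pair of factors it produces: one added to the $M$-side and one to the $M'$-side. The possible steps are the identical, subsumed, overlapping, and independent cases (the independent case pairs an original factor with a new \ac{lf}). In each case the two added factors have equal scope by construction:
\begin{itemize}
\item identical: $\boldsymbol{R}_{(i)}=\boldsymbol{R'}_{(j)}$;
\item subsumed: the smaller factor is extended to the larger scope;
\item overlapping: both factors are extended to $\boldsymbol{R}_{(i)}\cup\boldsymbol{R'}_{(j)}$;
\item independent: the \ac{lf} copies the scope of the original factor.
\end{itemize}
Setting $\beta$ to pair the two outputs of each step, every factor on each side appears in exactly one pair, so $\beta$ is a bijection that preserves scopes.

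\textbf{Main obstacle.} The hard part is Step 3's bookkeeping. As the Remark notes, indices are retained in the overlap case, so a single $\phi'_j$ may overlap several factors $\phi_i$. I would read the algorithm as producing a separate extended copy $\phi'^{\text{x}}_j$ for each overlapping pair $(i,j)$. The copies of $\phi'_j$ beyond the first are \ac{lf}-like duplicates whose constant is absorbed into $c$. With that reading, $\beta$ is a bijection between the sets of produced pairs rather than between original factors. I also have to check that these duplicates keep $M'^{\text{x}}$ an \ac{lx}: I would treat one copy as $\eta(\phi'_j)$ and the remaining copies as extra factors. This is the one spot where I expect to need to either adjust the definitions slightly or appeal to the unique-representation convention. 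Once it is settled, the scope equality of each pair gives the bijection required by \cref{def:struciden}.
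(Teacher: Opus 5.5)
Your three steps match the paper's proof. The outputs are Laplace extensions by construction, and the common space comes from line~1 of \cref{algo:msx} together with \cref{theorem:generalpartitionfunction}. The map $\beta$ is defined case by case: identical or nested scopes are matched, independent factors are paired with a Laplace copy, and overlapping factors are both lifted to the union of scopes, so matched factors have equal scopes. Your handling of repeated overlaps is more careful than the paper's: you emit one pair of outputs per overlapping pair and demote surplus copies to Laplace factors in $\boldsymbol{\Phi}^{\text{new}}$. The paper simply writes $\beta(\phi_i^{\text{x}})=\phi_j'^{\text{x}}$ as if each factor had a single image.

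There is a remaining gap, which the paper's proof shares: the multiplicity problem is not confined to the overlap branch. Your Step~1 claim that each $\phi_i$ is handled in exactly one of three ways is false in general, and so is Step~3's claim that every factor lies in exactly one pair.
\begin{itemize}
\item Take $M$ with scopes $\{A,B\}$ and $\{B,C\}$ and $M'$ with the single scope $\{A,B,C\}$; this respects the unique-representation convention.
\item Both factors of $M$ are subsumed in $\phi'_1$. However, the subsumption branch removes the index of $\phi'_1$ after the first match, so the second factor of $M$ is never processed.
\item Either that factor vanishes from $M^{\text{x}}$, so no bijection $\eta$ exists and $M^{\text{x}}$ is not an extension of $M$. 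Or it is kept without a partner, so no bijection $\beta$ exists.
\end{itemize}
The same clash arises when a factor is identical to one factor of the other graph and overlaps another.

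The fix is to apply your overlap device uniformly:
\begin{itemize}
\item Process every pair $(i,j)$ with $\boldsymbol{R}_{(i)}\cap\boldsymbol{R}'_{(j)}\neq\emptyset$ without removing indices.
\item For each such pair, emit on each side one factor with scope $\boldsymbol{R}_{(i)}\cup\boldsymbol{R}'_{(j)}$. Handle independent factors as before.
\item Let exactly one copy of each original factor carry its potentials as its Laplace extension, and make the others Laplace factors in $\boldsymbol{\Phi}^{\text{new}}$.
\end{itemize}
Coverage then holds because two intersecting scopes are always identical, nested, or overlapping. So every factor that is not independent of the other graph lies in at least one pair, and the pairing is a scope-preserving bijection.

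Finally, the Laplace property is only needed downstream, for \cref{theorem:surjection}. For the measurable-space claim itself, positivity of the potentials already gives $0<Z^{\text{x}}<\infty$.
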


\begin{proof}
For every individual factor $\phi_i\in\boldsymbol{\Phi}^{\text{orig}}$ or $\phi'_j\in\boldsymbol{\Phi'}^{\text{orig}}$, there is an Laplace extension step within Alg. 1, which means that the outcomes $M^{\text{x}}$ and $M'^{\text{x}}$ are indeed extended \acp{fg}.
The steps include trivial extensions when, e.g., some factor is subsumed in the considered factor or for structurally identical factors.

In addition, $\boldsymbol{R}^{\text{orig}}_{(i)}\subset \boldsymbol{R}^{\text{x}}_{(i)}$ holds, resulting in $\boldsymbol{R}^{\text{orig}}\subset \boldsymbol{R}^{\text{x}}$.

Let $X\in \boldsymbol{R'}^{\text{orig}}\setminus \boldsymbol{R}^{\text{orig}}$, then there exists a factor $\phi'_j\in \boldsymbol{\Phi'}$ such that $X\in \boldsymbol{R'}^{\text{orig}}_{\kern-0.4em  (j)}\setminus \boldsymbol{R}^{\text{orig}}_{(i)} $ for all $i=1,\ldots,\boldsymbol{\Phi}^{\text{orig}}$.
This can only happen, when either $\phi'_j$ is an independent factor to all $\phi_i$ or it is overlapping with at least one of the $\phi_i$ or it exists $\phi_i$, which is subsumed in $\phi'_j$.
For all cases, an extension is added to $M$ containing $\boldsymbol{R}^{\text{orig}}_{ (j)}$, leading to $X\in \boldsymbol{R}^{\text{new}}\supset \boldsymbol{R'}^{\text{orig}}\setminus  \boldsymbol{R}^{\text{orig}}$.

By construction of \cref{algo:msx}, it is not possible that $\boldsymbol{R}^{\text{new}}\setminus \boldsymbol{R'}^{\text{orig}} \neq \{\}$ and therefore $\boldsymbol{R}^{\text{x}}=\boldsymbol{R}^{\text{orig}}\cup \boldsymbol{R'}^{\text{orig}}$ holds .

Every individual extension step applied to $M$ is a \ac{lx} by construction, which makes \cref{theorem:generalpartitionfunction} applicable.
Therefore, any single assignment $\boldsymbol{r}^{\text{x}}\in \boldsymbol{R}^{\text{x}}$ is uniquely defined via
\begin{align*}
    P_{M^{\text{x}}}(\boldsymbol{r}^{\text{x}}) & = \frac{1}{Z^{\text{x}}} \prod_{\phi_i^{\text{x}}\in \boldsymbol{\Phi}^{\text{x}}} \phi_i^{\text{x}}(\boldsymbol{r}_i^{\text{x}}) 
\end{align*}
with $Z^{\text{x}}= Z\cdot c \prod_{X\in \boldsymbol{R}^{\text{new}}}^{} \vert \range(X)\vert= Z\cdot c\cdot \vert \mathcal{X}_{\boldsymbol{R}^{\text{new}}}\vert$, $c:=\prod_{\phi^{\text{x}}_i\in\boldsymbol{\Phi}^{\text{x}}} c_i\in\mathbb{R}_{> 0}$, $c_i$ given by $c_i\cdot \phi_i(\boldsymbol{r}^{\text{orig}}_i)=  \phi_i^{\text{x}}(\boldsymbol{r}_i^{\text{x}})$ for $\phi_i^{\text{x}}\in\boldsymbol{\Phi}^{\text{orig},\text{x}}$, and $\phi_{i}(\boldsymbol{r}^{\text{x}}_{i})=c_i$ for $\phi_i^{\text{x}}\in\boldsymbol{\Phi}^{\text{new}}$.
Which means $P_{M^{\text{x}}}$ defines a probability distribution on $(\mathcal{X}_{\boldsymbol{R}^{\text{orig}}\;\cup\;\boldsymbol{R'}^{\text{orig}}}, \mathcal{P}(\mathcal{X}_{\boldsymbol{R}^{\text{orig}}\;\cup\;\boldsymbol{R'}^{\text{orig}}}))$.

It remains to show structural identity by constructing a bijection $\beta:\boldsymbol{\Phi}^{\text{x}}\rightarrow\boldsymbol{\Phi'}^{\text{x}}$ as required in Def. 5.
Recall, we assume that neither of the original \acp{fg} $M$ nor $M'$ do not contain structurally identical factors within itself.

For each original factor $\phi_i\in\boldsymbol\Phi^{\text{orig}}$,
the mapping is defined as follows:
\begin{itemize}
    \item If $\phi_i$ and some $\phi'_j\in\boldsymbol\Phi'^{\text{orig}}$
    are structurally identical or one subsumes the other, then
    $\beta(\phi_i^{\text{x}})=\phi_j'^{\text{x}}$.
    \item If $\phi_i$ is independent of all factors in $M'$,
    then a Laplace factor $\phi_j'^{\text{x}}\in\boldsymbol\Phi'^{\text{new}}$
    with identical scope is introduced and
    $\beta(\phi_i^{\text{x}})=\phi_j'^{\text{x}}$.
    The symmetric case is handled analogously.
    \item For every overlapping pair $\phi_i$ and $\phi'_j$,
    Alg. 1 introduces Laplace extensions of both factors
    to the united scope, and
    $\beta(\phi_i^{\text{x}})=\phi_j'^{\text{x}}$.
\end{itemize}

By construction, all matched factor pairs share the same scope
$\boldsymbol R^{\text{x}}_{(i)}=\boldsymbol R'^{\text{x}}_{(j)}$.
Hence $\beta$ is well-defined and bijective, and
$M^{\text{x}}$ and $M'^{\text{x}}$ are structurally identical.
\end{proof}













\end{document}